\documentclass{article}

%
%
%

\usepackage{jmlr2en}


%
%
\usepackage{tgtermes}
\usepackage{appendix}
\usepackage{enumerate}
\usepackage{bbm}
\usepackage{tcolorbox}
\usepackage{adjustbox}
\usepackage{bm}
%
%
\usepackage{fancyhdr}
\usepackage{pdfpages}\graphicspath{{/Users/mnk/Pictures/}}
\pagestyle{fancy}
\fancyhead{} 
\fancyheadoffset[LE,RO]{\marginparsep+\marginparwidth}
\fancyfoot{} 
\fancyfoot[LE,RO]{\small \thepage}           
\fancyfoot[RE,LO]{\scriptsize {
}    } 
\setlength{\parindent}{0pt}
\setlength{\parskip}{+.5\baselineskip}

\pagestyle{fancy}
\usepackage{amsmath}
\usepackage{amssymb}
\usepackage{mathptmx}      
\usepackage{color}
\usepackage{mathdots}
\usepackage{dsfont}
\usepackage{pdfsync}
\usepackage{enumitem}
\usepackage{mathtools}
\usepackage{subfig}

%
 
%

\newcommand{\enumr}{\begin{enumerate}[label=\roman{*})]}
\newcommand{\enumR}{\begin{enumerate}[label=\Roman{*})]}
\newcommand{\enuma}{\begin{enumerate}[label=\alph{*})]}
\renewcommand{\leq}{\leqslant}
\renewcommand{\le}{\leqslant}  

\renewcommand{\geq}{\geqslant}
\renewcommand{\ge}{\geqslant}

\setlength\parindent{0pt}
\setlength{\parskip}{0.07in} 
\newcommand{\cc}{\citet}

\newcommand{\argmax}{\mbox{\,\rm arg\,max}}

\newcommand{\eqqrefs}[1]{Eqs.\,\eqref{#1}}
\newcommand{\inqqref}[1]{Ineq.\,\eqref{#1}}

\newcommand{\cpi}{\ensuremath {\pi_{\text{CHK}}}}
\newcommand{\api}{\ensuremath {\pi_{\text{ACF}}}}
\newcommand{\bpi}{\ensuremath {\pi_{\text{BK}}}}
\providecommand{\mceil}[1]{\left \lceil #1 \right \rceil }

%
 
\firstpageno{1}

\jmlrheading{1}{2015}{1-48}{4/00}{10/00}{Wesley  Cowan and Michael N. Katehakis }


\ShortHeadings{Normal Bandits of Unknown Means and Variances}{Cowan, Honda and Katehakis}
\firstpageno{1}
\begin{document}
\title{Normal Bandits of Unknown Means and Variances:\\ 
\small Asymptotic Optimality, Finite Horizon Regret Bounds, and a Solution to an Open Problem}
\author{\name Wesley Cowan \email  cwcowan@math.rutgers.edu \\
       \addr   Department of Mathematics\\ Rutgers University\\
110 Frelinghuysen Rd., Piscataway, NJ 08854, USA 
 \AND
       \name Junya Honda  \email honda@it.k.u-tokyo.ac.jp \\
       \addr Department  of Complexity Science and Engineering\\
Graduate School of Frontier Sciences,
The University of Tokyo.\\ 
5-1-5 Kashiwanoha, Kashiwa-shi, Chiba 277-8561, Japan.
       \AND
       \name Michael N. Katehakis  \email mnk@rutgers.edu  \\
       \addr Department of Management Science and Information Systems\\
       Rutgers University\\
 100 Rockafeller Rd., Piscataway, NJ 08854,  USA}


\maketitle
\begin{abstract}
Consider the problem of sampling sequentially from a finite number of $N \geq 2$ populations,
 specified by
random variables  $X^i_k$, $ i = 1,\ldots , N,$ and $k = 1, 2, \ldots$;   where $X^i_k$ denotes
the outcome from population $i$ the $k^{th}$ time it is sampled. It is assumed that for each fixed $i$, 
 $\{ X^i_k \}_{k \geq 1}$ is  a sequence of i.i.d. normal random variables, with unknown mean $\mu_i$ and unknown variance $\sigma_i^2$. 
 The  objective is to have a policy $\pi$  for deciding from which of  the $N$ populations    
 to sample from at any time $t=1,2,\ldots$ so as     
  to maximize the expected sum of outcomes of  $n$ total samples or 
 equivalently to minimize the regret due to lack on information of the parameters  $\mu_i$ and   $\sigma_i^2$.   In this paper, we present a simple inflated sample mean (ISM) index policy that is asymptotically optimal in the 
 sense of Theorem 4 below. This resolves a standing open problem from \cite{bkmab96}. Additionally, finite horizon regret bounds  are given\footnote{Substantial portion of the results reported here were derived independently by Cowan and 
Katehakis,  and  by Honda}.
\end {abstract}
 
\begin{keywords} Inflated Sample Means, Multi-armed Bandits, Sequential Allocation
\end{keywords} 

\section{Introduction and Summary}

Consider the problem of a controller sampling sequentially from a finite number of $N \geq 2$ populations or `bandits', where the measurements from population $i$ are specified by a sequence of i.i.d. random variables $\{ X^i_k \}_{k \geq 1}$, taken to be normal with finite mean $\mu_i$ and finite variance $\sigma_i^2$.
 The means $\{ \mu_i \}$ and variances $\{ \sigma_i^2 \}$ are taken to be  unknown to the controller. It is convenient to define the maximum mean, $\mu^* = \max_i \{ \mu_i \}$, and the bandit discrepancies $\{ \Delta_i \}$ where $\Delta_i = \mu^* - \mu_i \ge 0 .$ It is additionally convenient to define $\sigma^2_*$ as the minimal variance of any bandit that achieves $\mu^*$, that is $\sigma^2_* = \min_{i: \mu_i = \mu^*} \sigma_i^2$.

In this paper, given $k$ samples from population $i$ we will take the   estimators: 
$\bar{X}^i_k  = \sum_{t = 1}^k X^i_t/k$ and $S_i^2(k) = \sum_{t = 1}^k \left(X^i_t - \bar{X}^i_k\right)^2/k$
for  $\mu_i$ and   $\sigma_i^2$ respectively. Note  that the use of the biased estimator for the variance, with the $1/k$ factor in place of $1/(k-1)$, is largely for aesthetic purposes - the results presented here adapt to the use of the unbiased estimator as well.

For any adaptive, non-anticipatory policy $\pi$, $\pi(t) = i$ indicates that the controller samples bandit $i$ at time $t$. Define $T^i_\pi(n) = \sum_{t = 1}^n \mathbbm{1}\{ \pi(t) = i \}$, denoting the number of times bandit $i$ has been sampled during the periods $t = 1, \ldots, n$ under policy $\pi$;  we take, as a convenience, $T^i_\pi(0) = 0$ for all $i, \pi$.
The  {\sl value} of a policy $\pi$ is the expected sum of the first $n$ outcomes
under $\pi$, which we define to be the  function $V_\pi(n) :$
\begin{equation} \label{eqn:vn}
V_\pi(n)  = 
\mathbb{E}\left[  \sum_{i=1}^N     \sum_{k = 1}^{T^i_\pi(n)}  X^i_k  \right] 
= \sum_{i=1}^N      \mu_i \mathbb{E}\left[ T^i_\pi(n) \right],
\end{equation}
where for simplicity the dependence of  $V_\pi(n)$ on   the true,  unknown,  values of the parameters 
 $\underline{\mu}=(\mu_1, \ldots, \mu_N)$ and  $\underline{\sigma}^2=(\sigma_1^2,\ldots, \sigma_N^2)$, is supressed. 
  The {\sl pseudo-regret}, or simply {\sl regret}, of a policy is taken to be the expected loss due to ignorance of the
  parameters  $\underline{\mu} $ and    $\underline{\sigma}^2$ by the controller. Had the controller complete information, she would at every round activate some bandit $i^*$ such that $\mu_{i^*} = \mu^*= \max_i \{\mu_i\}$. For a given policy $\pi$, we   define the expected regret of that policy at time $n$ as   

\begin{equation} \label{eqn:regret}
R_\pi(n) = n \mu^* -V_\pi(n)
= \sum_{i = 1}^n \Delta_i \mathbb{E}\left[ T^i_\pi(n) \right].
\end{equation}

 It follows from \eqqrefs{eqn:vn} and \eqref{eqn:regret} that 
  maximization of $V_\pi(n)$with respect to $\pi$ is equivalent to minimization of  $R_\pi(n) $. This type of loss due to ignorance of the means (regret) was first introduced  in the 
  context of an $N=2$   problem  by \cite{Rb52} as the `loss per trial' $L_\pi(n)/n =  \mu^* -\sum_{i=1}^N     \sum_{k = 1}^{T^i_\pi(n)}  X^i_k /n$ (for which
  $R_\pi(n)=\mathbb{E}\left[L_\pi(n) \right]$), constructing a modified (along two sparse sequences) `play the winner' policy, $\pi_R$,  such that $L_{\pi_R}(n) =o(n)$ (a.s.) and $R_{\pi_R}(n) = o(n)$, using for his derivation only the assumption of the Strong Law of Large Numbers. 
 Following \cite{bkmab96} when $n\to \infty $, if $\pi$ is 
such that    $R_\pi(n) = o(n)$  we say   policy $\pi$ is     \textbf{\textit{uniformly convergent}} (UC) (since    then 
$\lim_{n\to\infty}  V_\pi(n)/n = \mu^*$ ).  However, if under a policy $\pi$,   $R_\pi(n)$  grew  at a slower pace, 
such as    $R_\pi(n) = o(n^{1/2})$,    or better $R_\pi(n) = o(n^{1/100})$ etc., then 
 the    controller would be  assured that $\pi$   is making a effective  trade-off between exploration and exploitation. It turns our that it  is possible to construct \textbf{\textit{ `uniformly fast convergent' }}  (UFC)  policies, also known as {\sl consistent} or {\sl strongly consistent},  defined as   
  the policies $\pi$ for  which: 
  $$R_\pi(n) = o(n^\alpha), \mbox{ for all $\alpha > 0$ for all $(\underline{\mu}, \underline{\sigma}^2)$} .$$
  
The existence of UFC  policies in the case considered here is well established, e.g.,  \cite{Auer02b} (fig. 4. therein) presented the following UFC   policy $\api$: 
\begin{tcolorbox}[colback=blue!1, arc=3pt, width=.94\linewidth]
\textbf{Policy  $ \boldmath \api $   (UCB1-NORMAL)}. At each $n=1,2,\ldots$:
\begin{itemize}
\item[i)] Sample from any bandit $i$ for which $ T^i_{\api}(n)  < \mceil{8 \ln n}.$
\item[ii)]  If  $ T^i_{\api}(n)  > \mceil{8 \ln n}  $, for all $i=1,\ldots,N,$ sample from bandit $\api(n+1)$ with 
\begin{equation}
\api (n+1) = \argmax_i \left\{  \bar{X}^i_{ T_{\pi}^i (n) } +4 \cdot S_i( T_{\pi}^i (n) ) \sqrt{ \frac{\ln n}{T_{\pi}^i (n)} }   \  \right\}.
\end{equation}
(Taking, in this case, $S^2_i(k)$ as the unbiased estimator.)
\end{itemize}
\end{tcolorbox}

Additionally, \cite{Auer02b}  (in Theorem 4. therein)  gave the following bound:
 
 \begin{equation}\label{en:a-bound}
 R_{\api}(n) \le  M_{\text{ACF}}(\underline{\mu}, \underline{\sigma}^2) \ln n + C_{\text{ACF}}(\underline{\mu}),
\mbox{ for all $n$ and all  $(\underline{\mu}, \underline{\sigma}^2)$} , 
\end{equation}
with
\begin{align}
 M_{\text{ACF}}(\underline{\mu}, \underline{\sigma}^2) &= 256  \sum_{i:\mu_i \neq \mu^*} \frac{\sigma_i^2}{ \Delta_i } + 8 \sum_{i=1}^N \Delta_i , \label{en:am-bound}\\ \label{en:ac-bound}
 C_{\text{ACF}}(\underline{\mu}) &= (1+\frac{\pi^2} {2}) \sum_{i=1}^N \Delta_i.
\end{align}
\inqqref{en:a-bound} readily implies that  $R_{\api}(n) \le  M_{\text{ACF}}(\underline{\mu}, \underline{\sigma}^2) \ln n +o( \ln  n)$.   Thus, since 
$ \ln  n=o(n^\alpha) $   for all $\alpha > 0$ and   $R_{\api}(n) \ge 0 ,$
it follows that  $\api$ is  uniformly fast convergent.

Given that UFC policies exist, the question immediately follows: just how fast can they be? The primary motivation of this paper is the following general result, from \cite{bkmab96}, where they showed that for any UFC policy $\pi$, the following holds:
\begin{equation}\label{en:lower-bound}
\liminf_{n\to\infty} \frac{ R_\pi(n) }{ \ln n } \geq \mathbb{M}_{\text{BK}}(\underline{\mu}, \underline{\sigma}^2), 
\mbox{ for all $(\underline{\mu}, \underline{\sigma}^2)$} , 
\end{equation}
where the bound itself $\mathbb{M}_{\text{BK}}(\underline{\mu}, \underline{\sigma}^2)$  is determined by the specific distributions of the populations, in this case
\begin{equation}\label{en:m-bound}
 \mathbb{M}_{\text{BK}}(\underline{\mu}, \underline{\sigma}^2)
 = \sum_{i:\mu_i \neq \mu^*} \frac{ 2 \Delta_i }{ \ln \left( 1 + \frac{ \Delta_i^2 }{\sigma_i^2} \right)} \ .
\end{equation}

For comparison, depending on the specifics of the bandit distributions, there is a considerable distance between the logarithmic term of the upper bound of Eq. \eqref{en:a-bound} and the lower bound implied by Eq. \eqref{en:m-bound}. 

The  derivation of  \inqqref{en:lower-bound} implies that in order to guarantee  that a policy is 
uniformly fast convergent,  sub-optimal populations  have to be sampled at least a logarithmic number
 of times. The above bound is a special case of a  more general result derived in \cite{bkmab96}
 (part 1 of Theorem 1 therein) 
  for distributions with   multi-parameters being unknown  (such as in the current problem of Normal populations with  both the mean and the variance being unknown): 
     $$\mathbb{M}_{\text{BK}}(\underline{\mu}, \underline{\sigma}^2)
= \sum_{i:\mu_i \neq \mu^*} \frac{ \Delta_i }{\mathbb{K}_i(\underline{\mu}, \underline{\sigma}^2)}$$ 
 with $\mathbb{K}_i(\underline{\mu}, \underline{\sigma}^2)  
 =\inf_{(\mu'_i,\sigma'^2)} \{ \mathbb{I}(f_{(\mu_i,{\sigma}_i^{2})}; f_{(\mu'_i,{\sigma'}_i^2)} ) : {\mu'}_i >\mu^*, {\sigma'}_i^{2}>0 \}
%
 =(1/2) \ln (1+\frac{\Delta_i^2}{\sigma^2_i}).$ 
 
  Previously, 
 \cite{lai85} had obtained such lower bounds  for   distributions  with one-parameter (such as in the current problem of Normal populations with   unknown mean but known variance).  Allocation 
policies that achieved the lower bounds were    called  \textbf{\textit{asymptotically efficient}} 
or   \textbf{\textit{optimal}}   in \cite{lai85}. 
 
\inqqref{en:lower-bound} motivates the definition of a uniformly fast convergent policy $\pi$ as having a \textbf{\textit{uniformly maximal convergence rate}} (UM) or simply being \textbf{\textit{asymptotically optimal}}, within the class of uniformly fast convergent policies,  if  $\lim_{n\to\infty}   R_\pi(n) /\ln n = \mathbb{M}_{\text{BK}}(\underline{\mu}, \underline{\sigma}^2),$ 
since then  $  V_\pi(n) = n \mu^* - \mathbb{M}_{\text{BK}}(\underline{\mu},\underline{\sigma}^2)  \ln n  + o( \ln n ) $. 
 
 \cite{bkmab96}  proposed the following index policy $\bpi$  as one that could achieve this lower bound: 
\begin{tcolorbox}[colback=blue!1, arc=3pt, width=.94\linewidth]
\textbf{Policy $ \boldmath \bpi $   (UCB-NORMAL$\!\,^0$)}
\begin{itemize}
\item[i)] For $n=1,2,\ldots,2N$ sample each bandit  twice, and 
\item[ii)]   for $n \geq 2N$, sample from bandit $\bpi (n+1)$ with 
\begin{equation}
\bpi(n+1) = \argmax_i \left\{ \bar{X}^i_{ T_{\pi}^i (n) } + S_i( T_{\pi}^i (n) ) \sqrt{ n^{\frac{2}{T_{\pi}^i (n)}} - 1}  \ \right\}.
\end{equation}
\end{itemize}
\end{tcolorbox}
\cite{bkmab96} were not able to establish the asymptotic optimality of the 
   $\bpi$  policy because they were not able to establish a sufficient condition ({\sl Condition} A3 therein), which we express here as the following equivalent conjecture (the referenced open question in the subtitle).
\begin{conjecture}
For each $i$, for every $\epsilon > 0$, and for $k \to \infty$, the following is true:
\begin{equation}\label{eqn:prob-conjecture}
\mathbb{P} \left( \bar{X}^i_{ j } + S_i( j ) \sqrt{ k^{2/j} - 1} < \mu_i - \epsilon \text{ for some } 2 \leq j \leq k \right) = o(1/k).
\end{equation} 
\end{conjecture}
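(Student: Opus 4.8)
The plan is to prove the stronger (sufficient) statement $\sum_{j=2}^{k}\mathbb{P}(A_j)=o(1/k)$, where $A_j:=\{\bar{X}^i_j+S_i(j)\sqrt{k^{2/j}-1}<\mu_i-\epsilon\}$, since this dominates the union in \eqref{eqn:prob-conjecture}. Fix $i$, drop it from the notation, and set $\delta=\epsilon/\sigma_i$ and $c_j=k^{2/j}-1$. The structural fact I would lean on is that for i.i.d. normal samples $\bar{X}_j$ and $S^2(j)$ are \emph{independent}, with $Z_j:=\sqrt{j}(\bar{X}_j-\mu_i)/\sigma_i\sim N(0,1)$ and $W_j:=jS^2(j)/\sigma_i^2\sim\chi^2_{j-1}$ independent. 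After standardizing, $A_j$ is exactly the event $Z_j+\sqrt{W_j\,c_j}<-\delta\sqrt{j}$, i.e. a lower deviation of the Student-$t$ statistic on $j-1$ degrees of freedom.

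Conditioning on $W_j$ and using $\Phi(-x)\le\tfrac12 e^{-x^2/2}$ together with $(\delta\sqrt{j}+\sqrt{W_jc_j})^2\ge\delta^2 j+W_jc_j$ yields $\mathbb{P}(A_j)\le\tfrac12 e^{-\delta^2 j/2}\,\mathbb{E}[e^{-W_jc_j/2}]$. The remaining factor is the Laplace transform of a chi-square, $\mathbb{E}[e^{-W_jc_j/2}]=(1+c_j)^{-(j-1)/2}=k^{-(j-1)/j}$, so that
\[ \mathbb{P}(A_j)\le\tfrac12\, e^{-\delta^2 j/2}\,k^{-1}\,k^{1/j}. \]
I would then split the sum at a threshold $j_0\to\infty$. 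On the block $j\ge j_0$ the geometric factor $e^{-\delta^2 j/2}$ overwhelms $k^{1/j}\le e^{(\ln k)/j_0}$, and choosing $j_0$ to grow faster than $\ln k$ (say $j_0=(\ln k)^2$) forces $k^{1/j}\le e$ and makes this whole block super-polynomially smaller than $1/k$; this is the benign part.

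The hard part is the block $2\le j<j_0$, and the case $j=2$ already exposes it. There $\sqrt{c_j}$ is polynomially large in $k$, so $A_j$ forces $S(j)$ to be atypically small, an event whose probability is governed only by the \emph{lower} tail of $\chi^2_{j-1}$, which is merely polynomially small; the bound above degrades to $\mathbb{P}(A_j)\le\tfrac12 e^{-\delta^2 j/2}k^{-(j-1)/j}$, only $O(k^{-1/2})$ at $j=2$. This is not an artifact of the estimates: restricting to $\{X_1\in(\mu_i-3\epsilon,\mu_i-2\epsilon)\}\cap\{|X_1-X_2|<\epsilon/\sqrt{c_2}\}$ already forces $A_2$ and has probability $\Omega(1/\sqrt{k})$, so both marginal tails are simultaneously tight and a term-by-term union bound cannot beat $1/\sqrt{k}$ in this regime.

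Consequently the crux is to handle the small-$j$ indices without summing their marginal probabilities. The route I would pursue is to refuse to decouple $j$ from the allocation: under any reasonable policy the optimal arm having only $j=O(1)$ samples at the \emph{large} time $k$ is itself rare, and pairing that scarcity with the index-underestimate should recover the missing power of $k$. Failing that, one is led to suspect that the inflation $\sqrt{k^{2/j}-1}$ is simply too weak at small sample sizes, so that the clean $o(1/k)$ statement is the delicate point at which \cite{bkmab96} stalled and on which the asymptotic optimality of $\bpi$ genuinely hinges — which would explain the paper's turn toward the ISM policy.
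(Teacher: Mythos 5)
You cannot complete this proof, and the reason is not a weakness in your estimates: the conjectured statement is \emph{false}, and the paper's contribution regarding this statement is a disproof (Proposition \ref{prop:conjecture-false}), not a proof. Your own analysis essentially contains that disproof. The event you exhibit at $j=2$, namely $\{X_1\in(\mu_i-3\epsilon,\mu_i-2\epsilon)\}\cap\{|X_1-X_2|<\epsilon/\sqrt{k-1}\}$, does force $A_2$ (since $S(2)=|X_1-X_2|/2$, so $S(2)\sqrt{k-1}<\epsilon/2$ while $\bar X_2<\mu_i-2\epsilon+\epsilon/(2\sqrt{k-1})$) and has probability $\Theta(1/\sqrt{k})$; hence the union in \eqref{eqn:prob-conjecture} is $\Omega(k^{-1/2})$, which is certainly not $o(1/k)$. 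That single observation, stated as a conclusion rather than as an obstacle, finishes the matter. Your upper bound $\mathbb{P}(A_j)\leq\frac12 e^{-\delta^2 j/2}k^{1/j-1}$ via the chi-square Laplace transform is correct and shows your lower bound at $j=2$ is tight up to constants; it is in fact quantitatively sharper than the paper's argument, which applies the lower half of Proposition \ref{prop:probability-bound} (valid only for $d=j-1\geq 2$, hence $j\geq 3$) at a fixed $j_0\geq 3$ to get $\mathbb{P}(A_{j_0,k,\epsilon})\geq\Omega(k^{1/j_0-1})$ and concludes the ratio to $1/k$ diverges like $k^{1/j_0}$.

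The genuine flaw in your write-up is therefore one of commitment and framing. Having produced a matching lower bound, you should not hedge with ``one is led to suspect''; and the proposed rescue --- refusing to ``decouple $j$ from the allocation'' --- cannot apply to the statement as written, since \eqref{eqn:prob-conjecture} is a pure statement about the order statistics of $k$ i.i.d.\ normal draws and contains no policy or stopping structure whatsoever. (Whether \bpi\ is nonetheless asymptotically optimal is a separate question the paper explicitly leaves open; the falsity of the conjecture only shows the Burnetas--Katehakis sufficient condition fails.) The paper's actual repair is the one you gesture at in your last sentence: the inflation $\sqrt{k^{2/j}-1}$ is too weak at small $j$, and replacing the exponent $2/j$ by $2/(j-2)$ (with three initial samples per arm, as in \cpi) shifts the degrees-of-freedom bookkeeping so that the upper half of Proposition \ref{prop:probability-bound} with $d=s-1$, $p=s-2$ yields a $t^{-1}/\ln t$ factor uniformly in $s\geq 3$, which is exactly what is needed in the $n^i_4$ term of Theorem \ref{thm:finite-time}.
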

We show  that the above conjecture is \textit{false} (cf. Proposition  \ref{prop:conjecture-false} in the Appendix). This does not imply that $\bpi$ fails to be UM (i.e., to be asymptotically optimal), but this failure means that the techniques established in \cite{bkmab96} are insufficient to verify its optimality. All is not lost, however. One of the central results of this paper is to establish that with a small change, the policy $\bpi$ may be modified to  one that is provably asymptotically optimal. We introduce in this paper the policy $\cpi$ defined in the following way:
\begin{tcolorbox}[colback=blue!1, arc=3pt, width=.94\linewidth]
\textbf{Policy     $ \boldmath \cpi $  (UCB-NORMAL$\!\,^2$)}
\begin{itemize}
\item[i)] For $n=1,2,\ldots,3N$ sample each bandit  three times, and 
\item[ii)]   for $n \geq 3N$, sample from bandit $\cpi(n+1)$ with 
\begin{equation}\label{eq.cpi}
\cpi(n+1) = \argmax_i \left\{ \bar{X}^i_{ T_{\pi}^i (n) } + S_i( T_{\pi}^i (n) ) \sqrt{ n^{\frac{2}{T_{\pi}^i (n) - 2}} - 1} \ \right\}.
\end{equation}
\end{itemize}
\end{tcolorbox}

 {\bf Remark 1}

1) Note  that policy   $\cpi$ is only a slight modification of policy  $\bpi$, 
the only difference between their indices   is the $-2$ in the power on $n$ under the radical, i.e.,   $2/(T^i_\pi(n) - 2) $  in  $\cpi(n+1)$
replacing   $2/T^i_\pi(n)$ in $\bpi(n+1)$. This change, while seemingly asymptotically negligible (as in practice $T^i_\pi(n) \to \infty$ (a.s.) with $n$),   has a profound effect on what is provable about $\cpi$.  

2) We note that the indices  of  policy       $\cpi$ 
are a significant  modification of those of the   optimal allocation policy $\pi_{\underline{ \sigma }^2 } $  for the case of normal bandits with {\sl known} variances, cf. \cite{bkmab96}  and \cite{rmab1995},   
  which are: 
$$\pi_{ \underline{\sigma}^2} (n+1) =\argmax_i \left\{ \bar{X}^i_{ T_{\pi}^i (n) } +     \sigma_i \sqrt{ \frac{2\ln n}{T_{\pi}^i (n)} } \  \right \}$$ 
the   difference being replacing the term $ \sigma_i \sqrt{ \frac{2\ln n}{T_\pi^i (n)} } $ in  $\pi_{ \underline{\sigma}^2}$ by 
$S_i( T_\pi^i (n) ) \sqrt{ n^{\frac{2}{T_\pi^i (n) - 2}}} - $  in   $\cpi .$
However, 
 the indices  of  policy       $\api$ are a minor  modification of the  optimal policy $\pi_{\sigma_i} $
 the   difference being replacing the term $ \sigma_i \sqrt{ \frac{2\ln n}{T_\pi^i (n)} } $ in  $\pi_{\sigma_i} $ by 
 $   S_i( T_\pi^i (n) ) \sqrt{ \frac{16 \ln n}{T_\pi^i (n)} } $ in   $\api .$
 
 3) The $\bpi$ and $\pi_{\underline{ \sigma }^2 }$ policies can be seen as connected in the following way, however, observing that $2 \ln n / T_\pi^i(n)$ is a first-order approximation of $n^{2/T_\pi^i(n)}-1 = e^{2 \ln n/T_\pi^i(n)} - 1$.

Following \cite{Rb52}, and additionally  \cite{gittins-79},   \cite{lai85} and \cite{weber1992gittins} there is a large literature on versions of this problem, cf. \cc{burnetas2003asymptotic}, \cc{burnetas1997finite} and references therein.  For  recent work in this area we refer to   \cite{audibert2009exploration}, 
\cite{auer2010ucb}, \cite{gittins2011multi}, \cite{bubeck2012best},  
\cite{cappe2013kullback}, 
\cite{kaufmann14}, 
\cite{2014minimax},  
 \cite{cowan15s}, \cite{cowan2015multi}, 
and references therein. 
    For more general dynamic programming extensions 
we refer to  
\cite{bkmdp97}, \cite{butenko2003cooperative}, \cite{optimistic-mdp}, \cite{audibert2009exploration}, \cite{littman2012inducing}, \cite{feinberg2014convergence} and references therein.     Other related work in this area includes:  \cite{burnetas1993sequencing}, \cite{BKlarge1996}, \cite{lagoudakis2003least},
\cite{bartlett2009regal}, \cite{tekin2012approximately}, \cite{jouini2009multi},
 \cite{dayanik2013asymptotically}, \cite{filippi2010optimism}, \cite{osband2014near}, \cc{dena2013}.

To our knowledge, outside the work in \cite{lai85}, 
  \cc{bkmab96} and \cite{bkmdp97},  asymptotically optimal policies have only been developed in 
 in 
 \cc{honda2011asymptotically}, and  in   \cc{honda2010}  for the   
  problem of finite known support where  
optimal policies, cyclic and randomized, that are simpler to implement than those consider in   \cc{bkmab96} were constructed. Recently in \cc{ck2015u},  
an asymptotically  optimal  policy for uniform bandits of unknown support was 
constructed.  The question of whether  asymptotically optimal  policies   exist in the case discussed herein  of  normal bandits with unknown means and unknown variances was recently resolved in the positive by \cite{honda13} who demonstrated that a form of Thompson sampling with certain priors on $(\underline{\mu}, \underline{\sigma}^2)$ achieves the asymptotic lower bound $\mathbb{M}_{\text{BK}}(\underline{\mu}, \underline{\sigma}^2) .$

The structure of the rest of the paper is as follows.  In section 2, Theorem \ref{thm:finite-time}  establishes a finite horizon bound on the regret of $\cpi$.  From this bound, it follows that $\cpi$ is asymptotically optimal (Theorem \ref{thm:thm-1}), and we provide a bound on the remainder term (Theorem \ref{thm:remainder}). Additionally, in Section \ref{sec:comparison}, the Thompson sampling policy of \cite{honda13} and $\cpi$ are compared and discussed, as both achieve asymptotic optimality.

\section{The Optimality Theorem and Finite Time Bounds}
The main results of this paper, that Conjecture 1 is false (cf. Proposition \ref{prop:conjecture-false} in the Appendix), the asymptotic optimality,  and the bounds on the behavior of $\cpi$, all depend on the following   probability bounds; we note that tighter bounds seem possible, but these are sufficient for this paper.


\begin{proposition}\label{prop:probability-bound}
Let $Z,\ U$ be independent random variables, $Z\sim N(0,1)$ a standard normal, and $U \sim \chi^2_d$ a 
chi-squared distribution with $d$ degrees of freedom, where   $d \geq 2$. 

For $\delta > 0,\ p > 0$, the following holds for all $k \geq 1$:
\begin{equation}
\frac{1}{2}  \mathbb{P} \left( \frac{1}{4} Z^2 \geq U\geq \delta^2 \right)k^{-d/p} \leq \mathbb{P} \left( \delta + \sqrt{ U } \sqrt{ k^{2/p} - 1} < Z  \right) \leq \frac{ e^{-(1+\delta^2)/2} p }{ 2\delta^2 \sqrt{d}  } \frac{ k^{ (1-d)/p } }{ \ln k }.
\end{equation}
\end{proposition}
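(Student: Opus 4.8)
The plan is to prove both inequalities by conditioning on $U$ and integrating out $Z$, so that the central object is
\[
\mathbb{P}(A) := \mathbb{P}\!\left(\delta + \sqrt{U}\sqrt{k^{2/p}-1} < Z\right) = \int_0^\infty \bar\Phi\!\left(\delta + \sqrt{uM}\right) f_U(u)\,du,
\]
where $\bar\Phi$ is the standard normal tail, $f_U(u) = u^{d/2-1}e^{-u/2}/(2^{d/2}\Gamma(d/2))$ is the $\chi^2_d$ density, and $M := k^{2/p}-1$. The case $k=1$ is vacuous (the right-hand side is infinite), so throughout I assume $k>1$, i.e. $M>0$. The single most useful structural fact, used in both directions, is that rescaling by $V := MU$ turns $\sqrt{U}\sqrt{M}$ into $\sqrt V$ and produces an explicit factor $M^{-d/2}$, which is what ultimately yields the powers $k^{-d/p}$ and $k^{(1-d)/p}$.

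For the upper bound I would apply the Gaussian tail estimate $\bar\Phi(x)\le e^{-x^2/2}/(x\sqrt{2\pi})$ with $x = \delta+\sqrt{uM}\ge\delta$, expand $(\delta+\sqrt{uM})^2 = \delta^2 + 2\delta\sqrt{uM} + uM$, and absorb $e^{-uM/2}$ into the density via $e^{-uM/2}e^{-u/2} = e^{-uk^{2/p}/2}$. After the substitution $u = t^2$ this leaves
\[
\mathbb{P}(A) \le \frac{e^{-\delta^2/2}}{\delta\sqrt{2\pi}\,2^{d/2}\Gamma(d/2)}\cdot 2\int_0^\infty t^{d-1} e^{-\delta\sqrt{M}\,t}\, e^{-k^{2/p} t^2/2}\,dt.
\]
The key trick is the elementary bound $t\,e^{-\delta\sqrt M\,t}\le 1/(e\,\delta\sqrt M)$: it simultaneously lowers $t^{d-1}$ to $t^{d-2}$ (so the remaining Gaussian integral evaluates to a multiple of $(k^{2/p})^{-(d-1)/2} = k^{(1-d)/p}$, giving the correct exponent) and extracts the second factor $\delta^{-1}$ together with the $e^{-1}$. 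Evaluating $\int_0^\infty t^{d-2}e^{-k^{2/p}t^2/2}dt = \tfrac12\Gamma(\tfrac{d-1}{2})(k^{2/p}/2)^{-(d-1)/2}$ (this is where $d\ge2$ is needed for integrability at $0$) and collecting constants yields a bound of the form $\tfrac{e^{-\delta^2/2}}{2\sqrt\pi\, e\,\delta^2\sqrt M}\,\tfrac{\Gamma((d-1)/2)}{\Gamma(d/2)}\,k^{(1-d)/p}$. Two final inputs finish the job: the inequality $\sqrt{k^{2/p}-1}\ge\ln(k^{2/p}) = \tfrac2p\ln k$ (equivalently $\sqrt{K-1}\ge\ln K$ for $K\ge1$, which follows from $(K-2)^2\ge0$) converts $M^{-1/2}$ into $\tfrac{p}{2\ln k}$, and a Gautschi-type ratio bound $\Gamma(\tfrac{d-1}{2})/\Gamma(\tfrac d2)\le 2\sqrt{\pi e}/\sqrt d$ supplies the $d^{-1/2}$ while turning the leftover $e^{-1}$ into $e^{-1/2}$; together they reproduce the stated constant $\tfrac{e^{-(1+\delta^2)/2}p}{2\delta^2\sqrt d}$ exactly.

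For the lower bound I would restrict $A$ to the smaller event $\{Z\ge 2\sqrt{MU}\}\cap\{MU\ge\delta^2\}$; on it $\delta\le\sqrt{MU}$ forces $Z\ge 2\sqrt{MU}\ge\delta+\sqrt{MU}$, so this event is contained in $A$. Passing to $V=MU$ gives $\mathbb{P}(A)\ge\int_{\delta^2}^\infty\bar\Phi(2\sqrt v)f_V(v)\,dv$ with $f_V(v) = v^{d/2-1}e^{-v/(2M)}/(M^{d/2}2^{d/2}\Gamma(d/2))$. When $M\ge1$ (i.e. $k\ge 2^{p/2}$) one has the pointwise density domination $f_V(v)\ge M^{-d/2}f_U(v)\ge k^{-d/p}f_U(v)$, because $M^{-d/2}\ge(M+1)^{-d/2}=k^{-d/p}$ and $e^{-v/(2M)}\ge e^{-v/2}$; hence $\mathbb{P}(A)\ge k^{-d/p}\int_{\delta^2}^\infty\bar\Phi(2\sqrt v)f_U(v)\,dv$. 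Finally $\int_{\delta^2}^\infty\bar\Phi(2\sqrt v)f_U(v)\,dv = \mathbb{P}(Z\ge 2\sqrt U,\,U\ge\delta^2) = \mathbb{P}(Z>0,\ \tfrac14 Z^2\ge U\ge\delta^2) = \tfrac12\mathbb{P}(\tfrac14 Z^2\ge U\ge\delta^2)$ by the symmetry of $Z$, which is exactly the claimed factor. The remaining compact range $1\le k<2^{p/2}$ (where the exponential comparison reverses) I would dispatch directly, using $\mathbb{P}(A)\ge\bar\Phi(\delta)$-type monotonicity together with the $k=1$ base case $\bar\Phi(\delta)\ge\tfrac12\mathbb{P}(\tfrac14 Z^2\ge U\ge\delta^2)$.

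The main obstacle is the upper bound, specifically landing on the precise constant. The naive estimates (dropping either the cross term $e^{-\delta\sqrt M t}$ or the Gaussian $e^{-k^{2/p}t^2/2}$) both collapse to a clean $\tfrac{e^{-\delta^2/2}}{\delta\sqrt{2\pi}}k^{-d/p}$ with no logarithm, and that bound fails to dominate the target once $\delta\sqrt d$ is large; so one is forced to retain both exponential factors and balance them. Getting a single power of $\ln k$ (rather than $(\ln k)^{d/2}$) out of $M^{-d/2}$ is the delicate point, and it works only because the $t\,e^{-\delta\sqrt M t}$ trick shifts the exponent from $k^{-d/p}$ to $k^{(1-d)/p}$, leaving exactly one factor of $\sqrt M$ to be traded for $1/\ln k$ via $\sqrt{K-1}\ge\ln K$. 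Verifying that the Gautschi-type $\Gamma$-ratio inequality holds with enough slack for all $d\ge2$ is the only other point requiring care.
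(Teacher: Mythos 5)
Your proposal is correct and follows essentially the same route as the paper's proof: for the upper bound, the Gaussian tail estimate with the cross term $e^{-\delta\sqrt{uM}}$ traded for a factor $1/(\delta\sqrt{M})$ (the paper uses $e^{-x}\le 1/x$ where you use $te^{-at}\le 1/(ea)$), followed by the same $\Gamma$-integral, a $\sqrt{M}\gtrsim \ln k/p$ inequality, and a $\Gamma$-ratio bound that combine to the identical constant; for the lower bound, the same inclusion $\{Z\ge 2\sqrt{MU},\, MU\ge\delta^2\}\subseteq A$ plus a scaling substitution producing $k^{-d/p}$. The only real deviation is that the paper first replaces $\sqrt{k^{2/p}-1}$ by $k^{1/p}$, so its density comparison works uniformly for all $k\ge 1$ and avoids your (correctly handled, but extra) case split over $1\le k<2^{p/2}$.
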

\begin{proof}[of Proposition \ref{prop:probability-bound}] The proof is given in the Appendix. \end{proof}

\begin{theorem}\label{thm:finite-time}
For  policy $\pi_{\text{CHK}}$ as defined above, the following bounds hold for all $n \geq 3N$ and all $\epsilon\in (0,1)$: 
\begin{equation}\label{en:finite-bound}
R_{\cpi}(n) \leq \sum_{i : \mu_i \neq \mu^* } \left( \frac{ 2 \ln n }{   \ln \left( 1 + \frac{ \Delta_i^2 }{ \sigma_i^2 } \frac{ \left( 1 - \epsilon \right)^2 }{ \left(1 + \epsilon \right) } \right) } + \sqrt{ \frac{ \pi }{ 2 e } } \frac{ 8 \sigma_*^3 }{ \Delta_i^3 \epsilon^3 } \ln \ln n  + \frac{8}{ \epsilon^2} + \frac{ 8 \sigma_i^2  }{ \Delta_i^2 \epsilon^2 } + 4 \right)\Delta_i.
\end{equation}
\end{theorem}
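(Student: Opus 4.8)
The plan is to bound $\mathbb{E}[T^i_{\cpi}(n)]$ separately for each suboptimal arm $i$ (those with $\mu_i\neq\mu^*$) and then assemble via $R_{\cpi}(n)=\sum_{i:\mu_i\neq\mu^*}\Delta_i\,\mathbb{E}[T^i_{\cpi}(n)]$, which is \eqref{eqn:regret}. Write $g_i(j,t)=\bar X^i_j+S_i(j)\sqrt{t^{2/(j-2)}-1}$ for the index of arm $i$ built from its first $j$ samples at horizon $t$, so $\cpi(t+1)=i$ forces $g_i(T^i_{\cpi}(t),t)\ge g_{i^*}(T^{i^*}_{\cpi}(t),t)$ for a fixed optimal arm $i^*$; I take $i^*$ to realize $\mu_{i^*}=\mu^*$ with the \emph{minimal} variance $\sigma_{i^*}^2=\sigma_*^2$, since it is $\sigma_*$ that appears in the bound. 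Fixing a level $c=\mu^*-\epsilon\Delta_i$ just below $\mu^*$ and an integer threshold $L$ (to become the leading logarithmic term), I combine the elementary inclusion $\{g_i\ge g_{i^*}\}\subseteq\{g_{i^*}<c\}\cup\{g_i\ge c\}$ with $T^i_{\cpi}(n)\le L+\sum_{t\ge 3N}\mathbbm{1}\{\cpi(t+1)=i,\,T^i_{\cpi}(t)\ge L\}$ to obtain
\begin{equation*}
\mathbb{E}[T^i_{\cpi}(n)]\le L+\underbrace{\sum_{t=3N}^{n-1}\mathbb{P}\!\big(g_{i^*}(T^{i^*}_{\cpi}(t),t)<c\big)}_{(\mathrm{II})}+\underbrace{\sum_{t=3N}^{n-1}\mathbb{P}\!\big(g_i(T^i_{\cpi}(t),t)\ge c,\ \cpi(t+1)=i,\ T^i_{\cpi}(t)\ge L\big)}_{(\mathrm{III})}.
\end{equation*}

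For $(\mathrm{III})$ I first pass from a sum over $t$ to a sum over the arm-$i$ count: since $T^i_{\cpi}(\cdot)$ is nondecreasing and jumps by one at each pull of $i$, each value $j\ge L$ is the pre-pull count for at most one $t$, and since $t\mapsto g_i(j,t)$ is increasing (the exponent $2/(j-2)>0$) with $t\le n$, the contributing event lies in $\{g_i(j,n)\ge c\}$. Hence $(\mathrm{III})\le\sum_{j\ge L}\mathbb{P}(g_i(j,n)\ge c)$. Writing $\bar X^i_j=\mu_i+(\sigma_i/\sqrt j)Z$ and $jS_i^2(j)/\sigma_i^2\sim\chi^2_{j-1}$ (independent), and controlling the radical by its value at $j=L$, the event $\{g_i(j,n)\ge c\}$ splits into a sample-mean deviation ($\bar X^i_j\gtrsim\mu_i+\epsilon\Delta_i$) or a sample-variance deviation ($S_i(j)\gtrsim\sigma_i\sqrt{1+\epsilon}$); a Gaussian tail summed over $j$ yields the $8\sigma_i^2/(\Delta_i^2\epsilon^2)$ contribution and a chi-squared tail summed over $j$ yields the $8/\epsilon^2$ contribution. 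The threshold $L$ is taken as the least $j$ for which, on the typical event $S_i(j)\le\sigma_i\sqrt{1+\epsilon}$, the inflation $S_i(j)\sqrt{n^{2/(j-2)}-1}$ can no longer raise $\bar X^i_j$ to $c$; solving $\sigma_i\sqrt{1+\epsilon}\,\sqrt{n^{2/(j-2)}-1}\lesssim(1-\epsilon)\Delta_i$ gives $L=2\ln n/\ln\!\big(1+\tfrac{\Delta_i^2}{\sigma_i^2}\tfrac{(1-\epsilon)^2}{1+\epsilon}\big)+O(1)$, the leading term, with the additive $O(1)$ (the $j-2$ shift and a ceiling) absorbed into the final constant $4$. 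The precise division of the $\epsilon$-slack between the mean level and the radius is routine bookkeeping.

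The crux is $(\mathrm{II})$, the probability that the optimal arm is underestimated, and this is exactly where Proposition \ref{prop:probability-bound} and the defining $-2$ of $\cpi$ enter. A union bound over the random count gives $(\mathrm{II})\le\sum_{t=3N}^{n-1}\sum_{j=3}^{t}\mathbb{P}(g_{i^*}(j,t)<c)$, and after standardizing ($\bar X^{i^*}_j=\mu^*+(\sigma_*/\sqrt j)Z$, $jS_{i^*}^2(j)/\sigma_*^2\sim\chi^2_{j-1}$) the inner event becomes $\{\delta_j+\sqrt U\sqrt{t^{2/(j-2)}-1}<Z\}$ with $Z\sim N(0,1)$, $U\sim\chi^2_{j-1}$, and $\delta_j=\epsilon\Delta_i\sqrt j/\sigma_*>0$ (this positivity is why $c$ is placed strictly below $\mu^*$, and it is what keeps Proposition \ref{prop:probability-bound} applicable). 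This is precisely the quantity bounded there with $d=j-1$, $p=j-2$, $k=t$. The decisive point is $d=p+1$, so $k^{(1-d)/p}$ collapses to $k^{-1}$ and the upper bound reads $\mathbb{P}(g_{i^*}(j,t)<c)\le\frac{e^{-(1+\delta_j^2)/2}(j-2)}{2\delta_j^2\sqrt{j-1}}\,\frac{1}{t\ln t}$. The Gaussian weight $e^{-\delta_j^2/2}=e^{-\epsilon^2\Delta_i^2 j/(2\sigma_*^2)}$ renders the $j$-sum convergent, and comparing $\sum_j j^{-1/2}e^{-\epsilon^2\Delta_i^2 j/(2\sigma_*^2)}$ to $\int_0^\infty x^{-1/2}e^{-\epsilon^2\Delta_i^2 x/(2\sigma_*^2)}\,dx=\sqrt{2\pi}\,\sigma_*/(\epsilon\Delta_i)$ produces the constant $\sqrt{\pi/2e}\,\sigma_*^3/(\epsilon^3\Delta_i^3)$ times $1/(t\ln t)$; finally $\sum_{t}1/(t\ln t)\le\ln\ln n+O(1)$ delivers the $\sqrt{\pi/2e}\,\tfrac{8\sigma_*^3}{\Delta_i^3\epsilon^3}\ln\ln n$ term, the factor $8$ absorbing the crude estimates $\tfrac{j-2}{j\sqrt{j-1}}\le C j^{-1/2}$ and the integral comparison.

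The main obstacle is $(\mathrm{II})$: its summability is exactly the content that Conjecture 1 failed to supply for $\bpi$. With the original exponent $2/j$ one would have $p=j$ and $d=j-1=p-1$, so $k^{(1-d)/p}=k^{(2-j)/j}=k^{2/j}\,k^{-1}$ carries an extra factor $k^{2/j}$ that is large for small $j$ (equal to $k$ at $j=2$) and destroys summability; the shift to $2/(j-2)$, making $d=p+1$ and the exponent exactly $-1$, is precisely what repairs this and is the technical heart of the argument. Assembling $L+(\mathrm{II})+(\mathrm{III})$, multiplying by $\Delta_i$, and summing over suboptimal $i$ gives \eqref{en:finite-bound}. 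As a sanity check on the leading constant, letting $\epsilon\downarrow 0$ turns the denominator $\ln\!\big(1+\tfrac{\Delta_i^2}{\sigma_i^2}\tfrac{(1-\epsilon)^2}{1+\epsilon}\big)$ into $\ln(1+\Delta_i^2/\sigma_i^2)=2\mathbb{K}_i$, so the leading term matches $\Delta_i/\mathbb{K}_i$ and recovers $\mathbb{M}_{\text{BK}}(\underline\mu,\underline\sigma^2)\ln n$, which is what the subsequent optimality statements require.
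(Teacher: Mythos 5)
Your proposal is correct and follows essentially the same route as the paper: the same decomposition of $\mathbb{E}[T^i_{\cpi}(n)]$ into a deterministic logarithmic count, a sample-mean deviation term, a sample-variance deviation term, and an optimal-arm underestimation term, with the same probabilistic inputs (Gaussian and chi-squared Chernoff bounds, and Proposition \ref{prop:probability-bound} applied with $d=j-1$, $p=j-2$ so that the exponent collapses to $-1$ and the $t$-sum gives $\ln\ln n$). The only differences are organizational (you threshold the pull count at $L$ before splitting into deviation events, whereas the paper conditions on mean/variance concentration first) and a minor bookkeeping choice of where the factor $8$ arises --- the paper obtains it exactly by setting the slack level $\tilde\epsilon=\Delta_i\epsilon/2$ symmetrically, which is the ``routine division'' you deferred.
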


Before giving the proof of this bound, we present two results, the first demonstrating the asymptotic optimality of $\cpi$, the second giving an $\epsilon$-free version of the above bound, which gives a bound on the sub-logarithmic remainder term. It is worth noting the following.  The bounds of Theorem \ref{thm:finite-time} can actually be improved, through the use of a modified version of Proposition \ref{prop:probability-bound}, to eliminate the $\ln \ln n$ dependence, so the only dependence on $n$ is through the initial $\ln n$ term. The cost of this, however, is a dependence on a larger power of $1/\epsilon$. The particular form of the bound given in Eq. \eqref{en:finite-bound} was chosen to simplify the following two results, cf. Remark 4 in the proof of Propositition \ref{prop:probability-bound}.

\begin{theorem}\label{thm:thm-1}
For a policy $\pi_{\text{CHK}}$ as defined above, $\pi_{\text{CHK}}$ is asymptotically optimal in the sense that 
\begin{equation}
\lim_{n\to\infty} \frac{ R_{\pi_{\text{CHK}}}(n) }{ \ln n } = \mathbb{M}_{\text{BK}}(\underline{\mu}, \underline{\sigma}^2).
\end{equation}
\end{theorem}

\begin{proof}[of Theorem \ref{thm:thm-1}]
For any $\epsilon$ such that $0 < \epsilon <1$, we have from Theorem \ref{thm:finite-time} that the followings holds:
\begin{equation}
\limsup_{n\to\infty} \frac{ R_{\pi_{\text{CHK}}}(n) }{ \ln n } \leq  \sum_{i : \mu_i \neq \mu^* } \frac{ 2 \Delta_i }{   \ln \left( 1 + \frac{ \Delta_i^2 }{ \sigma_i^2 } \frac{ \left( 1 - \epsilon \right)^2 }{ \left(1 + \epsilon \right) } \right) }.
\end{equation}
Taking the infimum over all such $\epsilon$, 
\begin{equation}
\limsup_{n\to\infty} \frac{ R_{\pi_{\text{CHK}}}(n) }{ \ln n } \leq  \sum_{i : \mu_i \neq \mu^* } \frac{ 2 \Delta_i }{   \ln \left( 1 + \frac{ \Delta_i^2 }{ \sigma_i^2 } \right) } = \mathbb{M}_{\text{BK}}(\underline{\mu}, \underline{\sigma}^2),
\end{equation}
and observing the lower bound of Eq. \eqref{en:lower-bound} completes the result.
\end{proof}

\begin{theorem}\label{thm:remainder}
For a policy $\pi_{\text{CHK}}$ as defined above, $R_{\cpi}(n) \leq \mathbb{M}_{\text{BK}}(\underline{\mu}, \underline{\sigma}^2) \ln n + O( (\ln n)^{3/4} \ln \ln n )$, and more concretely
\begin{equation}\label{en:ck-bound}
\begin{split}
R_{\cpi}(n) \leq M^0_{\text{CHK}}(\underline{\mu}, \underline{\sigma}^2) \ln n + & M^1_{\text{CHK}}(\underline{\mu}, \underline{\sigma}^2) (\ln n)^{3/4} \ln \ln n  \\
+ &  M^2_{\text{CHK}}(\underline{\mu}, \underline{\sigma}^2) (\ln n)^{3/4} \\ 
+ &  M^3_{\text{CHK}}(\underline{\mu}, \underline{\sigma}^2) (\ln n)^{1/2}  \\
+ &  M^4_{\text{CHK}}(\underline{\mu}, \underline{\sigma}^2),
\end{split}
\end{equation}
where
\begin{equation}
\begin{split}
M^0_{\text{CHK}}(\underline{\mu}, \underline{\sigma}^2) & = \mathbb{M}_{\text{BK}}(\underline{\mu}, \underline{\sigma}^2) \\
M^1_{\text{CHK}}(\underline{\mu}, \underline{\sigma}^2) & = 64  \sqrt{ \frac{ \pi }{ 2 e } } \sum_{i : \mu_i \neq \mu^* } \left(  \frac{\sigma_*^3 }{ \Delta_i^2 }   \right) \\
M^2_{\text{CHK}}(\underline{\mu}, \underline{\sigma}^2) & = 10 \sum_{i : \mu_i \neq \mu^* }\left( \frac{ \Delta_i^3 }{ \left(\sigma_i^2 + \Delta_i^2\right) {\ln\left(1 + \frac{\Delta_i^2}{\sigma_i^2}\right)}^2 } \right) \\
M^3_{\text{CHK}}(\underline{\mu}, \underline{\sigma}^2) & = 32 \sum_{i : \mu_i \neq \mu^* } \left( \Delta_i + \frac{ \sigma_i^2  }{ \Delta_i } \right) \\
M^4_{\text{CHK}}(\underline{\mu}, \underline{\sigma}^2) & = 4 \sum_{i : \mu_i \neq \mu^* } \Delta_i .
\end{split}
\end{equation}

\end{theorem}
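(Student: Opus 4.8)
The plan is to derive Theorem~\ref{thm:remainder} directly from the finite-horizon bound of Theorem~\ref{thm:finite-time} by choosing the free parameter $\epsilon$ as an explicit, vanishing function of $n$ and then organizing the resulting terms by their order in $\ln n$. Since \eqref{en:finite-bound} holds for \emph{every} $\epsilon \in (0,1)$, I would set $\epsilon = \tfrac{1}{2}(\ln n)^{-1/4}$; one first checks this lies in $(0,1)$ for all $n \geq 3N$ (indeed $\epsilon \leq \tfrac12$ whenever $\ln n \geq 1$, which holds since $n \geq 3N \geq 6$). This choice is engineered to balance the two genuinely competing contributions in \eqref{en:finite-bound}: the $\ln\ln n$ term, which scales like $\epsilon^{-3}\ln\ln n$, and the perturbation of the leading logarithmic term, which scales like $\epsilon\ln n$. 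With $\epsilon \asymp (\ln n)^{-1/4}$ these land at orders $(\ln n)^{3/4}\ln\ln n$ and $(\ln n)^{3/4}$, respectively, which is exactly the shape of \eqref{en:ck-bound}.

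With $\epsilon$ so fixed, I would substitute termwise. The four ``error'' terms are immediate: using $\epsilon^{-2} = 4(\ln n)^{1/2}$ and $\epsilon^{-3} = 8(\ln n)^{3/4}$, the $\ln\ln n$ term produces exactly $M^1_{\text{CHK}}(\ln n)^{3/4}\ln\ln n$ (fixing the constant $64\sqrt{\pi/(2e)}$), the pair $\tfrac{8}{\epsilon^2}+\tfrac{8\sigma_i^2}{\Delta_i^2\epsilon^2}$ produces $M^3_{\text{CHK}}(\ln n)^{1/2}$ (fixing the constant $32$), and the residual $4\Delta_i$ produces $M^4_{\text{CHK}}$. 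These matchings are routine algebra. The substantive step is the leading term $2\Delta_i\ln n/\ln\bigl(1 + \tfrac{\Delta_i^2}{\sigma_i^2}g(\epsilon)\bigr)$, where $g(\epsilon) = (1-\epsilon)^2/(1+\epsilon)$. Writing $a = \Delta_i^2/\sigma_i^2$ and $b = 1+a$, I would isolate the discrepancy from the target $\mathbb{M}_{\text{BK}}$-term by bounding $\tfrac{1}{\ln(1+ag(\epsilon))} - \tfrac{1}{\ln b}$. The key elementary inequalities are $1 - g(\epsilon) = \tfrac{\epsilon(3-\epsilon)}{1+\epsilon} \leq 3\epsilon$ and $\ln(1+x) \leq x$, which together give $\ln\tfrac{1+a}{1+ag(\epsilon)} \leq \tfrac{3a\epsilon}{1+ag(\epsilon)}$. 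Combined with lower bounds on $1+ag(\epsilon)$ and $\ln(1+ag(\epsilon))$ that are uniform over the admissible range $\epsilon \leq \tfrac12$, this yields a bound of the form $\tfrac{C\Delta_i^3}{(\sigma_i^2+\Delta_i^2)(\ln b)^2}\,\epsilon\ln n$; since $\epsilon\ln n = \tfrac12(\ln n)^{3/4}$, this is the $M^2_{\text{CHK}}(\ln n)^{3/4}$ term, and the stated big-$O$ remainder follows since among the four remainder terms $M^1_{\text{CHK}}(\ln n)^{3/4}\ln\ln n$ dominates.

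The main obstacle I anticipate is precisely this last, \emph{non-asymptotic} control of the leading logarithmic denominator. A formal first-order expansion gives the coefficient $\tfrac{3\Delta_i^3}{(\sigma_i^2+\Delta_i^2)(\ln b)^2}$ effortlessly, since $g'(0) = -3$; but producing a clean inequality valid for all $n \geq 3N$ requires replacing $1+ag(\epsilon)$ and $\ln(1+ag(\epsilon))$ by their $\epsilon = 0$ values, and the error in doing so must be controlled uniformly in the bandit parameters $(\Delta_i,\sigma_i^2)$ and monotonically in $\epsilon$ over the full admissible range. This is exactly where the inflation of the first-order coefficient $3$ to the stated constant $10$ must be \emph{justified} rather than merely quoted, and it is the only place where the argument is more than bookkeeping; once it is in hand, assembling the five pieces into \eqref{en:ck-bound} is mechanical.
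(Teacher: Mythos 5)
Your overall route is exactly the paper's: substitute $\epsilon = \tfrac{1}{2}(\ln n)^{-1/4}$ into Theorem~\ref{thm:finite-time} and sort the resulting terms by their order in $\ln n$. Your bookkeeping for $M^1_{\text{CHK}}$, $M^3_{\text{CHK}}$, $M^4_{\text{CHK}}$ is correct, as is the identification $\tfrac{a}{1+a} = \tfrac{\Delta_i^2}{\sigma_i^2+\Delta_i^2}$ needed to put the leading-term correction into the form of $M^2_{\text{CHK}}$. The gap is in the one step you yourself flag as the crux: the elementary chain you propose cannot produce the constant $10$. Writing $a = \Delta_i^2/\sigma_i^2$ and $g(\epsilon) = (1-\epsilon)^2/(1+\epsilon)$, your inequalities give
\[
\frac{1}{\ln(1+ag(\epsilon))} - \frac{1}{\ln(1+a)} \;\leq\; \frac{3a\epsilon}{\bigl(1+ag(\epsilon)\bigr)\ln\bigl(1+ag(\epsilon)\bigr)\ln(1+a)},
\]
and the only bound on $g$ that is uniform over the admissible range $\epsilon\in[0,1/2]$ is $g(\epsilon)\geq 1/6$. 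Converting the right-hand side to the target form $\tfrac{Ca\epsilon}{(1+a)\ln(1+a)^2}$ then forces $C \geq 3(1+a)\ln(1+a)\big/\bigl((1+a/6)\ln(1+a/6)\bigr)$, a quantity that tends to $18$ as $a\to 0$ and as $a\to\infty$ and peaks near $31$ around $a\approx 15$; it exceeds $10$ for every $a>0$. So your argument proves the theorem only with $M^2_{\text{CHK}}$ inflated by roughly a factor of three: the qualitative statement $R_{\cpi}(n)\leq \mathbb{M}_{\text{BK}}\ln n + O((\ln n)^{3/4}\ln\ln n)$ survives, but the stated constants do not.

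The paper obtains $10$ by a genuinely sharper, global argument (Proposition~\ref{prop:that-d-inequality}): the function $H_G(\epsilon)=1/\ln\bigl(1+Gg(\epsilon)\bigr)$ is shown to be convex in $\epsilon$ on $[0,1)$ (Proposition~\ref{prop:convexity}); since the claimed majorant is linear in $\epsilon$ and agrees with $H_G$ at $\epsilon=0$, it suffices to verify the inequality at the single endpoint $\epsilon=1/2$, where $g=1/6$. That endpoint inequality is a one-variable statement in $G$, proved by showing the difference vanishes as $G\to 0$ and is nondecreasing (its derivative is a perfect square over a positive denominator). The constant $10$ is essentially tight there — at $\epsilon=1/2$ and small $a$ the two sides agree through the first two orders — so no product of termwise uniform bounds of the kind you propose can recover it; you need the convexity-plus-endpoint reduction or something equally sharp. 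Everything else in your proposal matches the paper.
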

While the above bound admittedly has a more complex form than such a bound as in Eq. \eqref{en:a-bound}, it demonstrates the asymptotic optimality of the dominating term, and bounds the sub-linear remainder term.

\begin{proof}[of Theorem \ref{thm:remainder}]
The bound follows directly from Theorem \ref{thm:finite-time}, taking $\epsilon = \frac{1}{2}(\ln n)^{-1/4}$ for $n \geq 3$, and observing the following bound, that for $\epsilon$ such that $0 < \epsilon < 1/2$,
\begin{equation}
\frac{1}{\ln \left( 1 + \frac{ \Delta_i^2 }{ \sigma_i^2} \frac{(1-\epsilon)^2}{1 + \epsilon} \right) } \leq \frac{1}{\ln \left( 1 + \frac{ \Delta_i^2 }{ \sigma_i^2} \right) } + \frac{10 \Delta_i^2}{ \left(\sigma_i^2 + \Delta_i^2\right) \ln\left(1 + \frac{ \Delta_i^2 }{ \sigma_i^2 }\right)^2 } \epsilon.
\end{equation}
This inequality is proven separately as Proposition \ref{prop:that-d-inequality} in the Appendix.
\end{proof}

We make no claim that the results of Theorems \ref{thm:finite-time}, \ref{thm:remainder} are the best achievable for this policy $\cpi$. At several points in the proofs, choices of convenience were made in the bounding of terms, and different techniques may yield tighter bounds still. But they are sufficient to demonstrate the asymptotic optimality of $\cpi$, and give useful bounds on the growth of $R_{\cpi}(n)$.

\begin{proof}[of Theorem 1]
In this proof, we take $\pi = \pi_\text{CHK}$ as defined above. For notational convenience, we define the index function
\begin{equation}
u_i(k, j) = \bar{X}^i_{ j } + S_i( j ) \sqrt{ k^{\frac{2}{j-2}} - 1}.
\end{equation}

The structure of this proof will be to bound the expected value of $T^i_\pi(n)$ for all sub-optimal bandits $i$, and use this to bound the regret $R_\pi(n)$. The basic techniques follow those in 
 \cite{rmab1995} for the known variance case, modified accordingly here for the unknown variance case and assisted by the probability bound of Proposition \ref{prop:probability-bound}. For any $i$ such that $\mu_i \neq \mu^*$, we define the following quantities: Let $1 > \epsilon > 0$ and define $\tilde{\epsilon} = \Delta_i \epsilon/2$. For $n \geq 3N$,
\begin{equation}
\begin{split}
n^i_1(n, \epsilon) & = \sum_{t = 3N}^n \mathbbm{1} \{ \pi(t+1) = i, u_i(t, T^i_\pi(t)) \geq \mu^* - \tilde{\epsilon}, \bar{X}^i_{ T^i_\pi(t) } \leq \mu_i + \tilde{\epsilon} , S^2_i(T^i_\pi(t))  \leq \sigma_i^2(1 + \epsilon) \} \\
n^i_2(n, \epsilon) & = \sum_{t = 3N}^n \mathbbm{1} \{ \pi(t+1) = i, u_i(t, T^i_\pi(t)) \geq \mu^* - \tilde{\epsilon} , \bar{X}^i_{ T^i_\pi(t) } \leq \mu_i + \tilde{\epsilon} , S^2_i(T^i_\pi(t))   > \sigma_i^2( 1 + \epsilon ) \} \\
n^i_3(n, \epsilon) & = \sum_{t = 3N}^n \mathbbm{1} \{ \pi(t+1) = i, u_i(t, T^i_\pi(t)) \geq \mu^* - \tilde{\epsilon} , \bar{X}^i_{ T^i_\pi(t) } > \mu_i + \tilde{\epsilon}  \} \\
n^i_4(n, \epsilon) & = \sum_{t = 3N}^n \mathbbm{1} \{ \pi(t+1) = i, u_i(t, T^i_\pi(t)) < \mu^* - \tilde{\epsilon} \}.
\end{split}
\end{equation}
Hence, we have the following relationship for $n \geq 3N$, that
\begin{equation}\label{eqn:t-n-relation}
T^i_\pi(n+1) = 3 + \sum_{t = 3N}^n \mathbbm{1}  \{ \pi(t+1) = i \} = 3 + n^i_1(n, \epsilon) + n^i_2(n, \epsilon) + n^i_3(n, \epsilon) + n^i_4(n, \epsilon).
\end{equation}
The proof proceeds by bounding, in expectation, each of the four terms.

Observe that, by the structure of the index function $u_i$, 
\begin{equation}
\begin{split}
n^i_1(n, \epsilon) & \leq \sum_{t = 3N}^n \mathbbm{1} \left\{ \pi(t+1) = i, ( \mu_i + \tilde{\epsilon} ) + \sigma_i \sqrt{1 + \epsilon} \sqrt{ t^{ \frac{2}{T^i_\pi(t) - 2} } - 1 } \geq \mu^* - \tilde{\epsilon} \right\} \\
& = \sum_{t = 3N}^n \mathbbm{1} \left\{ \pi(t+1) = i, T^i_\pi(t) \leq \frac{ 2 \ln t }{   \ln \left( 1 + \frac{ 1 }{ \sigma_i^2 } \frac{ \left( \Delta_i - 2 \tilde{\epsilon} \right)^2 }{ \left(1 + \epsilon \right) } \right) } + 2\right\} \\
& = \sum_{t = 3N}^n \mathbbm{1} \left\{ \pi(t+1) = i, T^i_\pi(t) \leq \frac{ 2 \ln t }{   \ln \left( 1 + \frac{ \Delta_i^2 }{ \sigma_i^2 } \frac{ \left( 1 - \epsilon \right)^2 }{ \left(1 + \epsilon \right) } \right) } + 2\right\} \\
& \leq \sum_{t = 3N}^n \mathbbm{1} \left\{ \pi(t+1) = i, T^i_\pi(t) \leq \frac{ 2 \ln n }{   \ln \left( 1 + \frac{ \Delta_i^2 }{ \sigma_i^2 } \frac{ \left( 1 - \epsilon \right)^2 }{ \left(1 + \epsilon \right) } \right) } + 2\right\} \\
& \leq \sum_{t = 1}^n \mathbbm{1} \left\{ \pi(t+1) = i, T^i_\pi(t) \leq \frac{ 2 \ln n }{   \ln \left( 1 + \frac{ \Delta_i^2 }{ \sigma_i^2 } \frac{ \left( 1 - \epsilon \right)^2 }{ \left(1 + \epsilon \right) } \right) } + 2\right\} \\
& \leq \frac{ 2 \ln n }{   \ln \left( 1 + \frac{ \Delta_i^2 }{ \sigma_i^2 } \frac{ \left( 1 - \epsilon \right)^2 }{ \left(1 + \epsilon \right) } \right) } + 2 + 2.
\end{split}
\end{equation}
The last inequality follows, observing that $T^i_\pi(t)$ may be expressed as the sum of $\pi(t) = i$ indicators, and seeing that the additional condition bounds the number of non-zero terms in the above sum. The additional $+2$ simply accounts for the $\pi(1) = i$ term and the $\pi(n+1) = i$ term. 
 Note, this bound is sample-path-wise.


For the second term,
\begin{equation}\label{en:second-term}
\begin{split}
n^i_2(n, \epsilon) & \leq \sum_{t = 3N}^n \mathbbm{1} \{ \pi(t+1) = i, S^2_i(T^i_\pi(t)) > \sigma_i^2(1 + \epsilon) \} \\
& =  \sum_{t = 3N}^n \sum_{k = 2}^t \mathbbm{1} \{ \pi(t+1) = i, S^2_i(k) > \sigma_i^2(1 + \epsilon), T^i_\pi(t) = k \} \\
& =  \sum_{t = 3N}^n \sum_{k = 2}^t \mathbbm{1} \{ \pi(t+1) = i, T^i_\pi(t) = k \}\mathbbm{1} \{ S^2_i(k) > \sigma_i^2(1 + \epsilon) \} \\
& \leq \sum_{k = 2}^n \mathbbm{1} \{ S^2_i(k) > \sigma_i^2(1 + \epsilon) \} \sum_{t = k}^n \mathbbm{1} \{ \pi(t+1) = i, T^i_\pi(t) = k \} \\
& \leq \sum_{k = 2}^n \mathbbm{1} \{ S^2_i(k) > \sigma_i^2(1 + \epsilon) \}.
\end{split}
\end{equation}
The last inequality follows as, for fixed $k$, $\{ \pi(t+1) = i, T^i_\pi(t) = k \}$ may be true for at most one value of $t$. Recall that $k S^2_i(k) / \sigma_i^2$ has the distribution of a $\chi_{k-1}^2$ random variable. Letting $U_{k} \sim \chi^2_{k}$, from the above we have
\begin{equation}
\begin{split}
\mathbb{E} \left[ n^i_2(n, \epsilon) \right] & \leq \sum_{k = 2}^n \mathbb{P} \left( S^2_i(k) > \sigma_i^2(1 + \epsilon) \right) \\
& \leq \sum_{k = 2}^\infty \mathbb{P} \left( U_{k-1} / k > (1 + \epsilon) \right) \\
& \leq \sum_{k = 2}^\infty \mathbb{P} \left( U_{k-1}  / (k-1) > (1 + \epsilon) \right) \\
& = \sum_{k = 1}^\infty \mathbb{P} \left( U_{k} > k (1 + \epsilon) \right) \\ 
& \leq \frac{1}{ \sqrt{ \frac{ e^\epsilon }{ 1 + \epsilon } } - 1} \leq \frac{8}{\epsilon^2} < \infty.
\end{split}
\end{equation}
The penultimate step is a Chernoff bound on the terms, $\mathbb{P} \left( U_{k} > k (1 + \epsilon) \right) \leq (e^{-\epsilon} ( 1 + \epsilon ))^{k/2}.$

To bound the third term, a similar rearrangement to Eq. \eqref{en:second-term} (using the sample mean instead of the sample variance) yields:
\begin{equation}
\begin{split}
n^i_3(n, \epsilon) & \leq \sum_{t = 3N}^n \mathbbm{1} \{ \pi(t+1) = i, \bar{X}^i_{ T^i_\pi(t) } > \mu_i + \tilde{\epsilon} \}  \leq \sum_{k = 2}^n \mathbbm{1} \{  \bar{X}^i_{ k } > \mu_i + \tilde{\epsilon} \}.
\end{split}
\end{equation}
Recalling that $\bar{X}^i_k - \mu_i \sim Z \sigma_i / \sqrt{k}$ for $Z$ a standard normal,
\begin{equation}
\begin{split}
\mathbb{E} \left[ n^i_3(n, \epsilon) \right] & \leq \sum_{k = 2}^n \mathbb{P} \left(  \bar{X}^i_{ k } > \mu_i + \tilde{\epsilon} \right) \leq \sum_{k = 1}^\infty \mathbb{P} \left(   Z \sigma_i / \sqrt{k} >  \tilde{\epsilon} \right)  \leq \frac{1}{ e^{  \frac{ \tilde{\epsilon}^2 }{2 \sigma_i^2 } } - 1} \leq \frac{ 2 \sigma_i^2  }{ \tilde{\epsilon}^2 } < \infty.
\end{split}
\end{equation}
The penultimate step is a Chernoff bound on the terms, $\mathbb{P} \left(   Z >  \delta \sqrt{k} \right) \leq e^{- k \delta^2 / 2 }$.

To bound the $n^i_4$ term, observe that in the event $\pi(t+1) = i$, from the structure of the policy it must be true that $u_i(t, T^i_\pi(t)) = \max_j u_j(t, T^j_\pi(t))$. Thus, if $i^*$ is some bandit such that $\mu_{i^*} = \mu^*$, $u_{i^*}(t, T^{i^*}_\pi(t) ) \leq u_i(t, T^i_\pi(t))$. In particular, we take $i^*$ to be a bandit that not only achieves the maximal mean $\mu^*$, but also the minimal variance among optimal bandits, $\sigma^2_{i^*} = \sigma^2_*$. We have the following bound,
\begin{equation}
\begin{split}
n^i_4(n, \epsilon) & \leq \sum_{t = 3N}^n \mathbbm{1} \{ \pi(t+1) = i, u_{i^*}(t, T^{i^*}_\pi(t)) < \mu^* - \tilde{ \epsilon } \} \\
& \leq \sum_{t = 3N}^n \mathbbm{1} \{ u_{i^*}(t, T^{i^*}_\pi(t)) < \mu^* - \tilde{ \epsilon } \} \\
& \leq \sum_{t = 3N}^n \mathbbm{1} \{ u_{i^*}(t, s) < \mu^* - \tilde{ \epsilon } \text{ for some } 3 \leq s \leq t\}.
\end{split}
\end{equation}
The last step follows as for $t$ in this range, $3 \leq T^{i^*}_\pi(t) \leq t$. Hence
\begin{equation}\label{eqn:n-4-bound}
\mathbb{E} \left[ n^i_4(n, \epsilon) \right] \leq \sum_{t = 3N}^n \mathbb{P} \left( u_{i^*}(t, s) < \mu^* - \tilde{ \epsilon } \text{ for some } 3 \leq s \leq t \right).
\end{equation}
As an aside, this is essentially the point at which the conjectured Eq. \eqref{eqn:prob-conjecture} would have come into play for the proof of the optimality of $\pi_{\text{BK}}$, bounding the growth of the corresponding term for that policy. We will essentially prove a successful version of that conjecture here. Define the events $A^*_{s, t, \epsilon} = \{ u_{i^*}(t, s) < \mu^* - \tilde{ \epsilon } \}$. Observing the distributions of the sample mean and sample variance, we have (similar to Eq. \eqref{eqn:prop-restatement}) for $Z$ a standard normal and $U_{s-1} \sim \chi^2_{s-1}$, with $U,\ Z$ independent,
\begin{equation}
\begin{split}
\mathbb{P} \left( A^*_{s, t, \epsilon} \right) & = \mathbb{P} \left( \frac{ \tilde{ \epsilon } }{ \sigma_{*} } \sqrt{s} + \sqrt{U_{s-1}} \sqrt{t^{\frac{2}{s-2}} - 1} < Z \right) \\
& \leq \frac{ e^{-{(\tilde{ \epsilon }/\sigma_*)}^2 s/2} (s-2) }{ 2 {(\tilde{ \epsilon }/\sigma_*)}^2 s \sqrt{e (s-1)}  } \left( \frac{ t^{ -1 } }{ \ln t } \right) \\
& \leq \frac{ e^{-{( \tilde{ \epsilon }/\sigma_*)}^2 s/2} }{ 2 {(\tilde{ \epsilon }/\sigma_*)}^2 } \frac{1}{ \sqrt{ e s } } \left( \frac{ t^{ -1 } }{ \ln t } \right) \\
& \leq \left(  \frac{1}{  2 {(\tilde{ \epsilon }/\sigma_*)}^2 \sqrt{e} } \right) \frac{ e^{-{(\tilde{ \epsilon }/\sigma_*)}^2 s/2} }{ \sqrt{ s } } \left( \frac{ t^{ -1 } }{ \ln t } \right). \\
\end{split}
\end{equation}
where the first inequality follows as an application of Proposition \ref{prop:probability-bound}, and the second since $s \geq 3$. Applying a union bound to Eq. \eqref{eqn:n-4-bound},
\begin{equation}
\begin{split}
\mathbb{E} \left[ n^i_4(n, \epsilon) \right] & \leq \sum_{t = 3N}^n \sum_{s = 3}^t    \mathbb{P} \left( A^*_{s, t, \epsilon} \right) \\
& \leq  \sum_{t = 3N}^n \sum_{s = 3}^t \left(  \frac{1}{  2 {(\tilde{ \epsilon }/\sigma_*)}^2 \sqrt{e} } \right) \frac{ e^{-{(\tilde{ \epsilon }/\sigma_*)}^2 s/2} }{ \sqrt{ s } } \left( \frac{ t^{ -1 } }{ \ln t } \right) \\
& \leq  \left(  \frac{1}{  2 {(\tilde{ \epsilon }/\sigma_*)}^2 \sqrt{e} } \right) \int_{s = 0}^\infty \frac{ e^{-{(\tilde{ \epsilon }/\sigma_*)}^2 s/2} }{ \sqrt{ s } } ds  \int_{t = e}^n \left( \frac{ t^{ -1 } }{ \ln t } \right) dt  \\
& =  \left(  \frac{1}{  2 {(\tilde{ \epsilon }/\sigma_*)}^2 \sqrt{e} } \right) \frac{ \sqrt{2\pi} }{ (\tilde{ \epsilon }/\sigma_*) } \ln \ln n \\
& =   \sqrt{ \frac{ \pi }{ 2 e } } \frac{ \sigma_*^3 }{ \tilde{ \epsilon }^3 } \ln \ln n . \\
\end{split}
\end{equation}
The bounds follow, removing the dependence of the $s$-sum on $t$ by extending it to $\infty$, and bounding the sums by integrals of the (decreasing) summands by slightly extending the range of each. From the above results, and observing that $T^i_\pi(n) \leq T^i_\pi(n+1)$, it follows from Eq. \eqref{eqn:t-n-relation} that for any $\epsilon$ such that $0 < \epsilon < 1$,
\begin{equation}
\begin{split}
\mathbb{E} \left[ T^i_\pi(n) \right] & \leq \frac{ 2 \ln n }{   \ln \left( 1 + \frac{ \Delta_i^2 }{ \sigma_i^2 } \frac{ \left( 1 - \epsilon \right)^2 }{ \left(1 + \epsilon \right) } \right) } + 4 + \frac{8}{ \epsilon^2} + \frac{ 2 \sigma_i^2  }{ \tilde{\epsilon}^2 } + \sqrt{ \frac{ \pi }{ 2 e } } \frac{ \sigma_i^3 }{ \tilde{ \epsilon }^3 } \ln \ln n \\
& \leq \frac{ 2 \ln n }{   \ln \left( 1 + \frac{ \Delta_i^2 }{ \sigma_i^2 } \frac{ \left( 1 - \epsilon \right)^2 }{ \left(1 + \epsilon \right) } \right) } + 4 + \frac{8}{ \epsilon^2} + \frac{ 8 \sigma_i^2  }{ \Delta_i^2 \epsilon^2 } + \sqrt{ \frac{ \pi }{ 2 e } } \frac{ 8 \sigma_*^3 }{ \Delta_i^3 \epsilon^3 } \ln \ln n.
\end{split}
\end{equation}
The result then follows from the definition of regret in Eq. \eqref{eqn:regret}.
%
%

\end{proof}

{\bf Remark 2}{ Numerical Regret Comparison:}
Figure 1 shows the results of a small simulation study done on a set of six populations with means and variances given in  Table 1. It provides plots of the regrets when implementing policies $\cpi, \api$, and $\pi_G$ a `greedy' policy that always activates the bandit with the current highest average.  
 Each policy was implemented over a horizon of 100,000 activations, each replicated 10,000 times to produce a good estimate of the average regret $R_\pi(n)$ over the times indicated. 
 The left plot is on the  time scale of the first  $10,000 $ activations, and the right
  is  on   the  full time scale of $100,000 $ activations.

\begin{center}
{ \small
 \begin{tabular}{|l||r|r|r|r|r|r|}
  \hline
   $\mu_i $ & 8 &8 &7.9 & 7 & -1 & 0\\
       \hline
$\sigma^2_i$  &  1 &1.4 & 0.5 & 3 & 1 & 4\\
  \hline  \hline    
  \multicolumn{7}{l}{Table  1}    \\
\end{tabular}
}
 \end{center}

 \begin{figure}[h!]
\begin{center}
\includegraphics[width=1\textwidth]{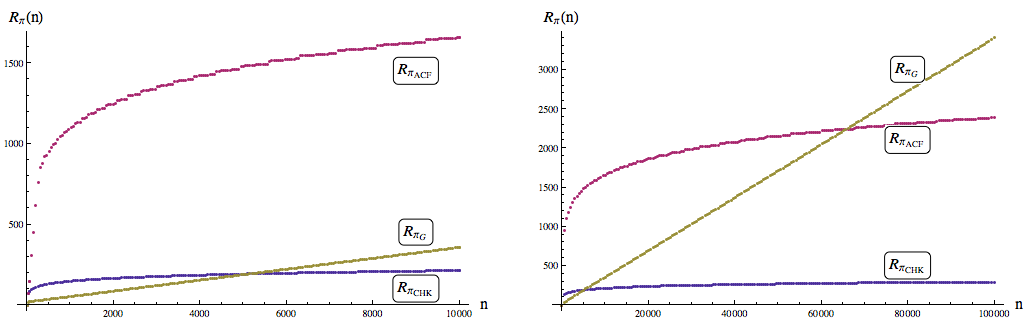}
\caption{\small Numerical Regret Comparison of \api, \cpi, and $\pi_{\text{G}}$; 
Left: $[0,10,000]$ range, Right:  $[0,100,000]$ range.} 
\end{center}
\end{figure}

{\bf Remark 3}{ Bounds and Limits:}
Figure 2 shows first (left) a comparison of the theoretical bounds on the regret, $B_{\api}(n)$ and $B_{\cpi}(n)$ representing the theoretical regret bounds of the RHS  of Eq. \eqref{en:a-bound} and Eq. \eqref{en:finite-bound} respectively, taking $\epsilon = (\ln n)^{1/4}$ in the latter case, for the means and variances indicated in  Table 1. Additionally, Figure 2 (right) shows the convergence of $R_{\cpi}(n) / \ln n$ to the theoretical lower bound $\mathbb{M}_{\text{BK}}(\underline{\mu}, \underline{\sigma}^2)$.

 \begin{figure}[h!]
\begin{center}
\includegraphics[width=.5\textwidth]{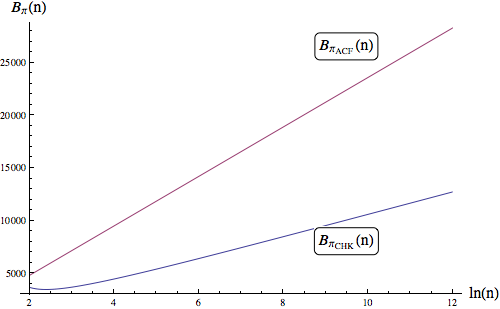}\includegraphics[width=.5\textwidth]{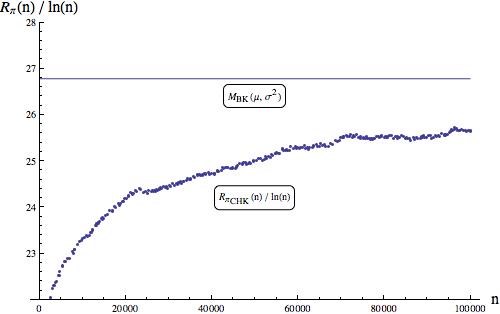}
\caption{\small Left: Plots of $B_{\api}(n)$ and $B_{\cpi}(n)$. \  \ Right: Convergence of $R_{\cpi} (n)/\ln (n) $ to $\mathbb{M}_{\text{BK}}(\underline{\mu}, \underline{\sigma}^2) $
} 
\end{center}
\end{figure}

\section{A Comparison of \cpi and Thompson Sampling}\label{sec:comparison}
\cite{honda13} proved that for $\alpha < 0$, the following Thompson sampling algorithm is asymptotically optimal, i.e., 
$\lim_{n\to\infty}   R_{\pi_{\text{CHK}}}(n) / \ln n  = \mathbb{M}_{\text{BK}}(\underline{\mu}, \underline{\sigma}^2).$
\begin{tcolorbox}[colback=blue!1, arc=3pt, width=.94\linewidth]
\textbf{Policy     $ \boldmath \pi_{\text{TS}} $  (TS-NORMAL$\!\,^\alpha$)}
\begin{itemize}
\item[i)] Initially, sample each bandit $\tilde{n} \geq \max(2, 3-\lfloor 2\alpha \rfloor)$ times.
\item[ii)] For $n \geq \tilde{n}\,$: For each $i$ generate a random sample $U^i_n$ from a posterior distribution for $\mu_i$, given $\left( \bar{X}^i_{T^i_\pi(n)}, S^2_i( T^i_\pi(n) )\right)$, and a prior for $\left( \mu_i, \sigma^2_i \right) \propto \left( \sigma^2_i \right)^{-1-\alpha}$. 
\item[iii)]  Then, take 
\begin{equation}
\pi_{\text{TS}}(n+1) = \argmax_i\ U^i_n.
\end{equation}
\end{itemize}
\end{tcolorbox}

Policies $\pi_{\text{TS}}$ and \cpi\ differ decidedly in structure. One key difference, $\pi_{\text{TS}}$ is an inherently randomized policy, while decisions under \cpi\ are completely determined given the bandit results at a given time. Given that both $\pi_{\text{TS}}$ and \cpi\ are asymptotically optimal, it is interesting to compare the performances of these two algorithms over finite time horizons, and observe any practical differences between them. To that end, two small simulation studies were done for different sets of bandit parameters $(\underline{\mu}, \underline{\sigma}^2)$. In each case, 
the uniform prior  
 $\alpha = -1$ was used. The simulations were carried out on a 10,000 round time horizon, and replicated sufficiently many times to get good estimates for the expected regret over the times indicated.
\begin{figure}[h!]
\begin{center}
\includegraphics[width=1\textwidth]{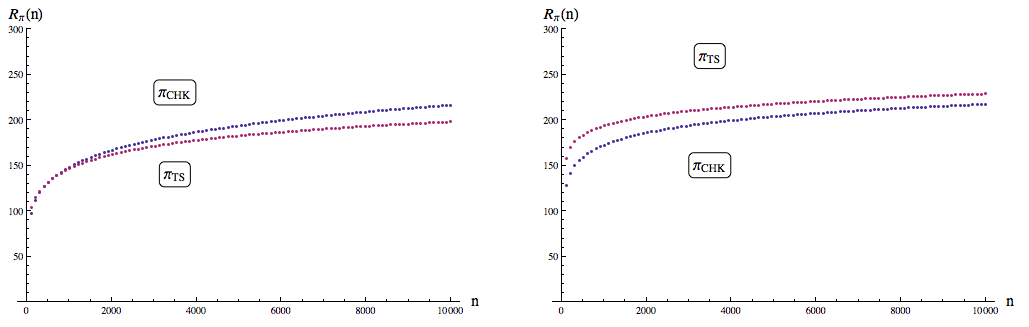}
\caption{\small Numerical Regret Comparison of \cpi and $\pi_{\text{TS}}$ for the 
 parameters, of Table 1, left and Table 2, right.} 
\end{center}
\end{figure}
 \begin{center}
{ \small
 \begin{tabular}{|l||r|r|r|r|r|r|}
  \hline
   $\mu_i $ & 10 & 9 & 8 & 7 & -1 & 0\\
       \hline
$\sigma^2_i$  &  8 &1 & 1 & 0.5 & 1 & 4\\
  \hline  \hline    
  \multicolumn{7}{l}{Table  2}    \\
\end{tabular}
}
 \end{center}

\begin{figure}[h!]
\begin{center}
\includegraphics[width=1\textwidth]{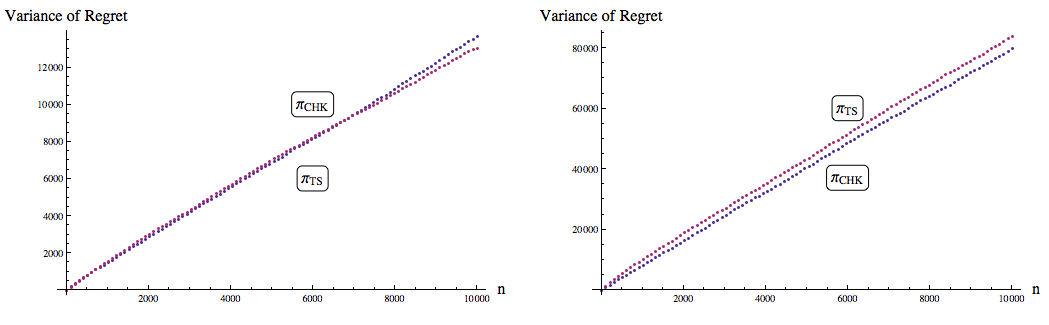}
\caption{\small Numerical comparison of variance of sample regret for \cpi and $\pi_{\text{TS}}$ for different parameters, of Table 1, left and Table 2, right.} 
\end{center}
\end{figure}

We observe from the above, and from general sampling of bandit parameters, that $\pi_{\text{TS}}$ and \cpi\ generally produce comparable expected regret. A general exploration of random parameters suggests that, on average, $\pi_{\text{TS}}$ is slightly superior to \cpi\ in cases where all bandits have roughly equal variances, while \cpi\ has an edge when the optimal bandits have large variance relative to the other bandits, and the size of the bandit discrepancies. It is additionally interesting to note that in the cases pictured above, the superior policy also demonstrated the smaller variance in sample regret (Figure 4). Additional numerical experiments, not pictured here, indicate that the superior policy in each case may exhibit a slightly heavier tail distribution towards larger regret. In general, the question of which policy is superior is largely context specific.

\bibliography{mab2015}

 \ \\

{\bf Acknowledgement:} 
We gratefully  acknowledge support for this project
from the National Science Foundation (NSF grant CMMI-14-50743). 

\appendix
\section{Additional Proofs}
\begin{proof}[of Proposition \ref{prop:probability-bound}]
Let $P = \mathbb{P}\left( \delta + \sqrt{U} \sqrt{ k^{2/p} - 1} < Z \right)$. Note immediately, $P \geq \mathbb{P}\left( \delta + \sqrt{U} k^{1/p} < Z \right)$. Further,
\begin{equation}
\begin{split}
P & \geq \mathbb{P}\left( \delta + \sqrt{U} k^{1/p} < Z \text{ and } \sqrt{U} k^{1/p} \geq \delta \right) \\
& \geq \mathbb{P}\left( 2\sqrt{U} k^{1/p} < Z \text{ and } \sqrt{U} k^{1/p} \geq \delta \right) \\
& = \int_{ \frac{\delta^2}{ k^{2/p} }  }^\infty \int_{ 2 \sqrt{ u } k^{1/p} }^\infty \frac{e^{-z^2/2}}{\sqrt{2\pi}} f_d(u) dz du.
\end{split}
\end{equation}
Where $f_d(u)$ is taken to be the density of a $\chi_d^2$-random variable. Letting $\tilde{u} = k^{2/p}u$,
\begin{equation}
\begin{split}
P &  \geq \frac{1}{k^{2/p}} \int_{ \delta^2 }^\infty \int_{ 2 \sqrt{ \tilde{u} } }^\infty \frac{e^{-z^2/2}}{\sqrt{2\pi}} f_d\left(\frac{\tilde{u}}{k^{2/p}}\right) dz d\tilde{u} \\
& =  \frac{1}{k^{2/p}} \int_{ \delta^2  }^\infty \int_{ 2 \sqrt{ \tilde{u} } }^\infty \frac{e^{-z^2/2}}{\sqrt{2\pi}} \frac{1}{2^{d/2} \Gamma(d/2) } { \left( \frac{\tilde{u}}{ k^{2/p} } \right) }^{d/2 - 1} e^{-\frac{\tilde{u}}{ 2 k^{2/p} }} dz d\tilde{u} \\
& =  \left( \frac{1}{k^{2/p}}  \right)^{d/2} \int_{ \delta^2  }^\infty \int_{ 2 \sqrt{ \tilde{u} } }^\infty \frac{e^{-z^2/2}}{\sqrt{2\pi}} \frac{1}{2^{d/2} \Gamma(d/2) } \tilde{u}^{d/2 - 1} e^{-\frac{\tilde{u}}{ 2 k^{2/p} }} dz d\tilde{u}
\end{split}
\end{equation}
Observing that $k^{2/p} \geq 1$,
\begin{equation}
\begin{split}
P & \geq  \left( \frac{1}{k^{2/p}}  \right)^{d/2} \int_{ \delta^2 }^\infty \int_{ 2 \sqrt{ \tilde{u} } }^\infty \frac{e^{-z^2/2}}{\sqrt{2\pi}} \frac{1}{2^{d/2} \Gamma(d/2) } \tilde{u}^{d/2 - 1} e^{-\frac{\tilde{u}}{ 2}} dz d\tilde{u}\\
& = k^{-d/p} \mathbb{P} \left( 2 \sqrt{U} \leq Z \text{ and } U \geq \delta^2 \right) \\
& = \frac{1}{2} k^{-d/p} \mathbb{P} \left( 4 U \leq Z^2 \text{ and } U \geq \delta^2 \right) = \frac{1}{2} k^{-d/p} \mathbb{P} \left( \frac{1}{4} Z^2 \geq U \geq \delta^2 \right) \\
\end{split}
\end{equation}

The exchange from integral to probability is simply the interpretation of the integrand as the joint pdf of $U$ and $Z$.

For the upper bound, we utilize the classic normal tail bound, $\mathbb{P}\left( x < Z \right) \leq e^{-x^2/2}/(x \sqrt{2\pi}).$
\begin{equation}
\begin{split}
P & \leq \mathbb{E} \left[ \frac{e^{-\left(\delta + \sqrt{U} \sqrt{ k^{2/p} - 1}\right)^2/2}}{ (\delta + \sqrt{U} \sqrt{ k^{2/p} - 1}) \sqrt{2\pi} } \right]  \leq \frac{ e^{-\delta^2/2} }{ \delta \sqrt{2\pi} } \mathbb{E} \left[ e^{- \delta \sqrt{U} \sqrt{ k^{2/p}-1 } - \frac{1}{2}U \left( k^{2/p}-1\right)}\right].
\end{split}
\end{equation}
Observing the bound that for positive $x$, $e^{-x} \leq 1/x$, and recalling that $d \geq 2$,
\begin{equation}\label{eqn:37}
\begin{split}
P & \leq \frac{ e^{-\delta^2/2} }{ \delta \sqrt{2\pi} } \mathbb{E} \left[ \frac{ e^{ - \frac{1}{2}U \left( k^{2/p}-1\right)} }{\delta \sqrt{U} \sqrt{ k^{2/p}-1 }}\right] \\
& = \frac{ e^{-\delta^2/2} }{ \delta^2 \sqrt{2\pi} \sqrt{ k^{2/p}-1 } } \mathbb{E} \left[ U^{ - \frac{1}{2} } e^{ - \frac{1}{2}U \left( k^{2/p}-1\right)} \right] \\
& = \frac{ e^{-\delta^2/2} }{ \delta^2 \sqrt{2\pi} \sqrt{ k^{2/p}-1 } } \left( \frac{k^{ (1-d)/p } \Gamma \left( \frac{d}{2} - \frac{1}{2} \right) }{ \sqrt{2} \Gamma \left( \frac{d}{2} \right) }\right).
\end{split}
\end{equation}
Here we utilize the following bounds: $e^x - 1 \geq (e/2)x^2$, which is easy to prove, and $\Gamma( d/2 - 1/2 ) / \Gamma( d/2 ) \leq \sqrt{2 \pi / d }$, which may be proved on integer $d \geq 2$ by induction. This yields:
\begin{equation}
P \leq \frac{ e^{-(1+\delta^2)/2} p }{ 2\delta^2  \ln k  } \frac{ k^{ (1-d)/p } }{ \sqrt{d} }.
\end{equation}
This completes the proof.

{\bf Remark 4}{ Room for Improvement:}
The choice of the $e^x - 1 \geq (e/2)x^2$ bound above was in fact arbitrary - other bounds, such as involving alternative powers of $x$, could be used. This would influence how the resulting bound on $P$ is utilized, for instance in the proof of Theorem \ref{thm:finite-time}. The use of $e^{-x} \leq 1/x$ in Eq. \eqref{eqn:37} should be considered similarly.
\end{proof}

\begin{proposition}\label{prop:conjecture-false}
Conjecture 1 is {\bf false} and for each $i$, for $\epsilon > 0$,
\begin{equation}\label{eqn:very-bad}
\frac{ \mathbb{P} \left( \bar{X}^i_{ j } + S_i( j ) \sqrt{ k^{2/j} - 1} < \mu_i - \epsilon \text{ for some } 2 \leq j \leq k \right) }{ 1/k } \to \infty \text{ as } k \to \infty.
\end{equation}.
\end{proposition}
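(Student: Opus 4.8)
The plan is to exploit the fact that the event ``$<\mu_i-\epsilon$ for some $2\le j\le k$'' is a union over $j$, so its probability is bounded below by the probability of any single term. I would isolate one convenient index $j=j_0$ whose term already has probability of order $k^{-d/p}$ with $d/p<1$ strictly, which forces the ratio to $1/k$ to diverge. Since Proposition \ref{prop:probability-bound} requires $d\ge 2$ and the $j$-th term corresponds to $d=j-1$, the smallest admissible choice is $j_0=3$.

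First I would translate the single-term event into the form appearing in Proposition \ref{prop:probability-bound}. Using $\bar X^i_j \sim N(\mu_i,\sigma_i^2/j)$, the fact that $jS_i^2(j)/\sigma_i^2 \sim \chi^2_{j-1}$, and the independence of $\bar X^i_j$ and $S_i(j)$, write $\bar X^i_j = \mu_i + (\sigma_i/\sqrt{j})Z_0$ with $Z_0\sim N(0,1)$ and $S_i(j) = (\sigma_i/\sqrt{j})\sqrt U$ with $U\sim\chi^2_{j-1}$. Subtracting $\mu_i$ and multiplying through by $\sqrt{j}/\sigma_i>0$, the event $\{\bar X^i_j + S_i(j)\sqrt{k^{2/j}-1} < \mu_i-\epsilon\}$ becomes $\{\delta + \sqrt U\sqrt{k^{2/p}-1} < Z\}$ with $Z=-Z_0\sim N(0,1)$, $p=j$, $d=j-1$, and $\delta=\epsilon\sqrt{j}/\sigma_i$, where $Z$ and $U$ are independent; this is precisely the setting of Proposition \ref{prop:probability-bound}.

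Then I would specialize to $j_0=3$, so $d=2\ge 2$, $p=3$, and $\delta=\epsilon\sqrt3/\sigma_i$ is a fixed positive constant. The lower bound of Proposition \ref{prop:probability-bound} gives, for the $j=3$ term $A_3$, the estimate $\mathbb{P}(A_3)\ge \tfrac12\,\mathbb{P}\!\left(\tfrac14 Z^2\ge U\ge \delta^2\right)k^{-2/3}$. The leading constant $c:=\tfrac12\,\mathbb{P}(\tfrac14 Z^2\ge U\ge\delta^2)$ is strictly positive: with $U\sim\chi^2_2$ and $Z\sim N(0,1)$ independent, the region $\{\delta^2\le U\le \tfrac14 Z^2\}$ is nonempty whenever $|Z|$ is large enough (an event of positive probability), and on that event the conditional mass of $U$ on $[\delta^2,\tfrac14 Z^2]$ is positive. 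Since for $k\ge 3$ the event ``for some $2\le j\le k$'' contains $A_3$, monotonicity bounds the full probability below by $c\,k^{-2/3}$.

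Finally, dividing by $1/k$ yields a lower bound of $c\,k^{1/3}$, which diverges as $k\to\infty$, establishing \eqref{eqn:very-bad} and hence the falsity of Conjecture 1. There is essentially no obstacle beyond the bookkeeping of the reduction and the positivity of $c$; the one conceptual point worth flagging is \emph{why} a single small-$j$ term dominates. The decisive feature is that the exponent $d/p=(j-1)/j$ is strictly below $1$ for every fixed $j$, so each fixed-$j$ term sits at order $k^{-(j-1)/j}=\omega(1/k)$. The term $j=2$ would in fact give the even larger order $k^{-1/2}$, but since $d=1$ there it falls outside the hypotheses of Proposition \ref{prop:probability-bound} and would demand a separate direct computation; the choice $j_0=3$ suffices and keeps the argument entirely within the tools already established.
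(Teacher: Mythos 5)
Your proposal is correct and follows essentially the same route as the paper: lower-bound the union by a single fixed-$j$ term, reduce that term to the setting of Proposition \ref{prop:probability-bound} with $d=j-1$, $p=j$, $\delta=\epsilon\sqrt{j}/\sigma_i$, and invoke the lower bound $\tfrac12\mathbb{P}(\tfrac14 Z^2\ge U\ge\delta^2)k^{(1/j)-1}$ to get a ratio of order $k^{1/j_0}\to\infty$. The only cosmetic difference is that you fix $j_0=3$ while the paper keeps a general $j_0\ge 3$, and you spell out the positivity of the leading constant, which the paper leaves implicit.
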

\begin{proof}[of Proposition \ref{prop:conjecture-false}]
Define the events $A^i_{j, k, \epsilon} = \{ \bar{X}^i_{ j } + S_i( j ) \sqrt{ k^{2/j} - 1} < \mu_i - \epsilon \}$. As the samples are taken to be normally distributed with mean $\mu_i$ and variance $\sigma_i^2$, we have that $\bar{X}^i_j - \mu_i \sim Z \sigma_i/\sqrt{j}$ and $S_i^2(j) \sim \sigma_i^2 U / j$, where $Z$ is a standard normal, $U \sim \chi^2_{j-1}$, and $Z, U$ independent. Hence,
\begin{equation}\label{eqn:prop-restatement}
\begin{split}
\mathbb{P} ( A^i_{j, k, \epsilon} ) & = \mathbb{P} \left( Z \frac{\sigma_i}{\sqrt{j}} + \sqrt{ U \frac{ \sigma_i^2}{j} } \sqrt{ k^{2/j} - 1} < - \epsilon \right)  = \mathbb{P} \left( \frac{ \epsilon }{ \sigma_i } \sqrt{j} + \sqrt{ U } \sqrt{ k^{2/j} - 1} < Z  \right).
\end{split}
\end{equation}
The last step is simply a re-arrangement, and an observation on the symmetry of the distribution of $Z$. For $j \geq 3$, we may apply Proposition \ref{prop:probability-bound} here for $d = j-1$, $p = j$, to yield
\begin{equation}\label{eqn:pi-lower-bound}
\mathbb{P} ( A^i_{j, k, \epsilon} ) \geq \frac{1}{2} \frac{ k^{1/j} }{k} \mathbb{P} \left( \frac{1}{4} Z^2 \geq U\geq \frac{ \epsilon^2 }{ \sigma_i^2} j \right).
\end{equation}
For a fixed $j_0 \geq 3$, for $k \geq j_0$ we have
\begin{equation}
\mathbb{P} \left( A^i_{j, k, \epsilon} \text{ for some } 2 \leq j \leq k \right) \geq \mathbb{P} ( A^i_{j_0, k, \epsilon} ) \geq O\left( 1/k \right) k^{1/j_0}.
\end{equation}
The proposition follows immediately. 
\end{proof}

\begin{proposition}\label{prop:that-d-inequality}
For $G > 0$, $0 \leq \epsilon < 1/2$, the following holds:
\begin{equation}
\frac{1}{\ln \left( 1 + G \frac{(1-\epsilon)^2}{1 + \epsilon} \right) } \leq \frac{1}{\ln \left( 1 + G \right) } + \frac{10 G}{ \left(1 + G \right) \ln\left(1 + G \right)^2 } \epsilon.
\end{equation}
\end{proposition}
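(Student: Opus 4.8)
The plan is to fix $G > 0$, abbreviate $g = g(\epsilon) := (1-\epsilon)^2/(1+\epsilon)$, and note that for $\epsilon \in [0,1/2]$ one has $g \in [1/6,1] \subset (0,1]$. Writing $\phi(y) = 1/\ln(1+y)$, the left-hand side is $\phi(Gg)$ and the first term on the right is $\phi(G)$, so the claim is exactly the upper bound $\phi(Gg) - \phi(G) \leq 10\epsilon\, G\,\psi(G)$, where $\psi(y) := -\phi'(y) = 1/\big((1+y)\ln(1+y)^2\big)$ is precisely the coefficient occurring on the right, since $G\psi(G) = G/\big((1+G)\ln(1+G)^2\big)$. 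The strategy is to write the difference $\phi(Gg)-\phi(G)$ as an integral of $\psi$ and to control that integrand by a multiple of $\psi(G)$.

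First I would apply the fundamental theorem of calculus along the path $s \mapsto \phi(Gs)$ for $s \in [g,1]$, giving
\begin{equation*}
\phi(Gg) - \phi(G) = -\int_g^1 \frac{d}{ds}\phi(Gs)\,ds = G\int_g^1 \psi(Gs)\,ds.
\end{equation*}
The crux is then the pointwise bound $\psi(Gs) \leq s^{-2}\psi(G)$, valid for all $s \in (0,1]$ and $G>0$. Cross-multiplying, this is equivalent to $q(G) \leq q(Gs)$ with $q(y) := (1+y)\ln(1+y)^2/y^2$; since $Gs \leq G$, it suffices to show $q$ is nonincreasing on $(0,\infty)$. Taking the logarithmic derivative, $q'(y) \leq 0$ reduces, after multiplying through by $(1+y)y>0$, to the elementary estimate $\ln(1+y) \geq 2y/(2+y)$. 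This in turn follows because $\chi(y) := \ln(1+y) - 2y/(2+y)$ satisfies $\chi(0)=0$ and $\chi'(y) = y^2/\big((1+y)(2+y)^2\big) \geq 0$.

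With the pointwise bound established, the integral collapses:
\begin{equation*}
\phi(Gg)-\phi(G) \leq G\psi(G)\int_g^1 s^{-2}\,ds = G\psi(G)\,\frac{1-g}{g}.
\end{equation*}
It then remains to check $\tfrac{1-g}{g} \leq 10\epsilon$. A direct computation gives $\tfrac{1-g}{g} = \tfrac1g - 1 = \epsilon(3-\epsilon)/(1-\epsilon)^2$, so (dividing by $\epsilon$, the case $\epsilon=0$ being trivial) the claim is $(3-\epsilon)/(1-\epsilon)^2 \leq 10$, i.e. $10(1-\epsilon)^2 - (3-\epsilon) = (1-2\epsilon)(7-5\epsilon) \geq 0$, which holds on $[0,1/2]$. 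Combining the two displays with $G\psi(G) = G/\big((1+G)\ln(1+G)^2\big)$ yields the proposition.

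I expect the main obstacle to be identifying and verifying the correct pointwise bound on the integrand, namely $\psi(Gs) \leq s^{-2}\psi(G)$. The exponent $-2$ is not arbitrary: as $G \to 0$ one has $\psi(x) \sim x^{-2}$, so on the relevant range the integrand behaves like $x^{-2}$ and the bound is asymptotically tight; consequently cruder estimates — for instance replacing $\psi(Gs)$ by its maximum $\psi(Gg)$ over the whole interval — lose a factor of order $1/g$ and are too weak, failing exactly as $\epsilon \to 1/2$, where the stated inequality is itself tight. Reducing this bound to the monotonicity of $q$, and thence to the Pad\'e-type estimate $\ln(1+y) \geq 2y/(2+y)$, is what makes the argument close cleanly.
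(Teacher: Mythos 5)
Your proof is correct, and it takes a genuinely different route from the paper's. The paper treats the left-hand side as a function $H_G(\epsilon)=1/\ln\bigl(1+G(1-\epsilon)^2/(1+\epsilon)\bigr)$ of $\epsilon$, proves in a separate proposition that it is convex on $[0,1)$, and then, since the two sides agree at $\epsilon=0$ and the right-hand side is affine in $\epsilon$, reduces the whole claim to the single endpoint check at $\epsilon=1/2$; that endpoint inequality is in turn verified by showing a certain function $F(G)$ vanishes at $0$ and has derivative expressible as a perfect square over a positive denominator. You instead keep $\epsilon$ fixed, substitute $g=(1-\epsilon)^2/(1+\epsilon)\in[1/6,1]$, write $\phi(Gg)-\phi(G)=G\int_g^1\psi(Gs)\,ds$ with $\psi=-\phi'$, and control the integrand via the pointwise bound $\psi(Gs)\le s^{-2}\psi(G)$, which you correctly reduce to the monotonicity of $q(y)=(1+y)\ln(1+y)^2/y^2$ and thence to $\ln(1+y)\ge 2y/(2+y)$; the final step $(1-g)/g=\epsilon(3-\epsilon)/(1-\epsilon)^2\le 10\epsilon$ on $[0,1/2]$ via the factorization $(1-2\epsilon)(7-5\epsilon)\ge 0$ checks out, and it correctly identifies $\epsilon=1/2$ as the point where the constant $10$ is exactly used, mirroring where the paper's endpoint check sits. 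Your argument buys self-containment --- it dispenses entirely with the auxiliary convexity proposition (whose proof in the paper is the messiest part) and with the somewhat unmotivated perfect-square identity for $F'(G)$ --- at the cost of needing the sharp exponent $-2$ in the pointwise bound; the paper's convexity route buys a conceptually clean reduction to a one-variable endpoint inequality but outsources real work to two further lemmas.
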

\begin{proof}
For any $G > 0$, the function $1/\ln\left(1 + G \frac{(1-\epsilon)^2}{1 + \epsilon}\right)$ is positive, increasing, and convex on $\epsilon \in [0, 1)$ (Proposition \ref{prop:convexity}). For a given $G > 0$, noting that the above inequality holds (as equality) at $\epsilon = 0$, due to the convexity it suffices to show that the inequality is satisfied at $\epsilon = 1/2$, or
\begin{equation}
\frac{1}{ \ln \left(1 + \frac{G}{6}\right) } \leq \frac{5 G}{(1 + G) \ln \left(1 + G\right)^2 } + \frac{1}{ \ln (1 + G) }.
\end{equation}
Equivalently, we consider the inequality
\begin{equation}\label{en:new-bound}
0 \leq \frac{5 G}{(1 + G) } + \ln (1 + G) - \frac{ \ln \left(1 + G\right)^2 }{ \ln \left(1 + \frac{G}{6}\right) }.
\end{equation}
Define the function $F(G)$ to be the RHS of Ineq. \eqref{en:new-bound}. Note that as $G \to 0$, $F(G) \to 0$, and in simplified form we have (for $G > 0$ and the limit as $G \to 0$),
\begin{equation}
F'(G) = \frac{ \left( (1 + G)\ln(1 + G) - (6 + G) \ln\left(1 + \frac{G}{6}\right)\right)^2}{(1+G)^2 (6 + G) \ln \left(1 + \frac{G}{6}\right)^2} \geq 0.
\end{equation}
It follows that $F(G) \geq 0$, and hence the desired inequality holds at $\epsilon = 1/2$. This completes the proof.
\end{proof}
\begin{proposition}\label{prop:convexity}
The function $H_G(\epsilon) = 1/\ln\left(1 + G \frac{(1-\epsilon)^2}{1 + \epsilon}\right)$ is positive, increasing, and convex in $\epsilon \in [0, 1),$ for any constant  $G > 0.$ 
\end{proposition}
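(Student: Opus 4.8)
The plan is to treat all three claims through the composition structure $H_G(\epsilon) = \phi(g(\epsilon))$, where $g(\epsilon) = G(1-\epsilon)^2/(1+\epsilon)$ and $\phi(x) = 1/\ln(1+x)$. First I would record the elementary behavior of $g$ on $[0,1)$: it is strictly positive, and direct differentiation gives $g'(\epsilon) = G(\epsilon+3)(\epsilon-1)/(1+\epsilon)^2 < 0$ and $g''(\epsilon) = 8G/(1+\epsilon)^3 > 0$, so $g$ is positive, decreasing, and convex, running from $g(0)=G$ down toward $0$. Positivity of $H_G$ is then immediate, since $1+g>1$ forces $\ln(1+g)>0$. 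Monotonicity is just as quick: $\phi$ is decreasing on $(0,\infty)$, and $g$ is decreasing, so $H_G' = \phi'(g)\,g'$ is a product of two negative factors, hence positive.

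The real content is convexity, which I would not attack through the composition $\phi''(g)(g')^2 + \phi'(g)g''$ directly (that form pits a positive against a negative term), but instead through the reciprocal identity. Writing $\psi(\epsilon) = \ln(1+g(\epsilon)) > 0$, one has $H_G'' = (1/\psi)'' = (2(\psi')^2 - \psi\psi'')/\psi^3$, so since $\psi^3 > 0$ convexity is equivalent to $2(\psi')^2 \ge \psi\psi''$. Substituting $\psi' = g'/(1+g)$ and $\psi'' = (g''(1+g) - (g')^2)/(1+g)^2$ and clearing the positive factor $(1+g)^{-2}$, this reduces to the inequality
\begin{equation}
(g')^2\bigl(2 + \ln(1+g)\bigr) \ge (1+g)\ln(1+g)\,g''.
\end{equation}

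To finish I would substitute the explicit forms of $g'$, $g''$, and $g$. Setting $x = g = G(1-\epsilon)^2/(1+\epsilon)$ and $L = \ln(1+x)$, and using the identity $G(1-\epsilon)^2 = x(1+\epsilon)$ to absorb $G$, the inequality collapses after multiplying through by $(1+\epsilon)^3$ to the clean form
\begin{equation}
(\epsilon+3)^2\, x\,(2 + L) \ge 8(1+x)L .
\end{equation}
Here I would drop the $\epsilon$-dependence entirely by noting $(\epsilon+3)^2 \ge 9$ on $[0,1)$, reducing matters to $18x + L(x-8) \ge 0$ for all $x>0$. This is trivial when $x \ge 8$, and for $0 < x < 8$ it follows from the standard bound $\ln(1+x) \le x$, which gives $L(8-x) \le x(8-x) < 8x \le 18x$.

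The main obstacle, and the step where care is needed, is the algebraic reduction to the two displayed inequalities — in particular recognizing that the reciprocal identity $2(\psi')^2 \ge \psi\psi''$ is the right pivot (the naive second-derivative expansion is harder to sign), and then carrying out the substitution cleanly so that the factor $G$ cancels and the bound becomes $\epsilon$-free via $(\epsilon+3)^2 \ge 9$. Once in that form, the final estimate is elementary.
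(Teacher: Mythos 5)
Your proof is correct and follows essentially the same route as the paper: after reducing convexity to a sign condition on the second derivative, both arguments land on exactly the same inequality $2C(3+\epsilon)^2 + (-8 + C(1+\epsilon(6+\epsilon)))\ln(1+C) \geq 0$ with $C = G(1-\epsilon)^2/(1+\epsilon)$, and then on the same one-variable inequality $18C + (C-8)\ln(1+C) \geq 0$. The only differences are cosmetic: you organize the computation via the reciprocal identity $(1/\psi)'' = (2(\psi')^2 - \psi\psi'')/\psi^3$ and dispatch the final inequality with $(3+\epsilon)^2 \geq 9$ and $\ln(1+x) \leq x$, where the paper instead argues by monotonicity in $\epsilon$ and then in $C$.
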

\begin{proof}
 That $H_G(\epsilon)$ is positive and increasing in $\epsilon$, follows immediately from inspection of $H_G$ and $H'_G$, given the hypotheses on $G, $ and $\epsilon$. 
To demonstrate convexity, by inspection of the terms of $H''_G(\epsilon)$, it suffices to show that for all relevant $G,$ and $ \epsilon$, the following inequality holds.
\begin{equation}
2G(1-\epsilon)^2(3 + \epsilon)^2 + \left( -8(1 + \epsilon) + G(1-\epsilon)^2(1 + \epsilon(6 + \epsilon))\right) \ln \left(1 + G \frac{(1-\epsilon)^2}{1 + \epsilon} \right) \geq 0.
\end{equation}
Defining $C= G (1-\epsilon)^2/(1 + \epsilon)$, it is sufficient to show that for all $C > 0$ and $\epsilon \in [0, 1)$ (eliminating a factor of $(1 + \epsilon)$ from the above),
\begin{equation}
2C(3 + \epsilon)^2 + \left( -8 + C(1 + \epsilon(6 + \epsilon))\right) \ln \left(1 + C \right) \geq 0.
\end{equation}
Defining $J_C(\epsilon)$ as the LHS of the above, note that $J'_C(\epsilon) = 2C(3 + \epsilon)(2 + \ln(1 + C)) > 0$. It suffices then to show $J_C(0) \geq 0$, or $18C + (C-8)\ln(1 + C) \geq 0$. Note this holds at $C = 0$, and $d/dC[ J_C(0) ] = (10 + 19C)/(1 + C) + \ln(1 + C) > 0$ for $C \geq 0$. Hence, $J_C(\epsilon) \geq 0$, and $H''_G(\epsilon) \geq 0$.
\end{proof}

\end{document}